\documentclass{article}
\usepackage{graphicx} % Required for inserting images

\usepackage{times}
\usepackage{soul}
\usepackage{url}
\usepackage[hidelinks]{hyperref}
\usepackage[utf8]{inputenc}
\usepackage[small]{caption}
\usepackage{graphicx}
\usepackage{amsmath}
\usepackage{amssymb}
\usepackage{mathtools}
\usepackage{amsthm}
\usepackage{booktabs}
\usepackage{algorithm}
\usepackage{algorithmic}
\usepackage{multirow}
\usepackage[switch]{lineno}

\usepackage{subfigure}
\usepackage{color}
\usepackage{authblk}

\usepackage{xcolor}    
\usepackage{algorithm}     
\usepackage{algorithmic}    
\usepackage{amsmath}      
\usepackage{booktabs}
\usepackage{threeparttable}
\usepackage[table]{xcolor}

\newtheorem{proposition}{Proposition}

\begin{document}

\title{Equivariant Efficient Joint Discrete and Continuous MeanFlow for Molecular Graph Generation}

\author[1]{Rongjian Xu}
\author[1]{Teng Pang}
\author[1]{Zhiqiang Dong}
\author[1]{Guoqiang Wu \textsuperscript{*}} 

\affil[*]{{\small Corresponding Author}}

\affil[1]{School of Software, Shandong University, Jinan, Shandong, China \\
rongjianxu50@gmail.com, silencept7@gmail.com, zhiqiangdong12@gmail.com}
\affil[*1]{School of Software, Shandong University, Jinan, Shandong, China; Luke EI, Jinan, China \\guoqiangwu90@gmail.com}

% \thanks{Corresponding Author}

\maketitle

\begin{abstract}
Graph-structured data jointly contain discrete topology and continuous geometry, which poses fundamental challenges for generative modeling due to heterogeneous distributions, incompatible noise dynamics, and the need for equivariant inductive biases. Existing flow-matching approaches for graph generation typically decouple structure from geometry, lack synchronized cross-domain dynamics, and rely on iterative sampling, often resulting in physically inconsistent molecular conformations and slow sampling.
To address these limitations, we propose Equivariant MeanFlow (EQUIMF), a unified SE(3)-equivariant generative framework that jointly models discrete and continuous components through synchronized MeanFlow dynamics. EQUIMF introduces a unified time bridge and average-velocity updates with mutual conditioning between structure and geometry, enabling efficient few-step generation while preserving physical consistency.
Moreover, we develop a novel discrete MeanFlow formulation with a simple yet effective parameterization to support efficient generation over discrete graph structures. Extensive experiments demonstrate that EQUIMF consistently outperforms prior diffusion and flow-matching methods in generation quality, physical validity, and sampling efficiency.
\end{abstract}

\section{Introduction}
Graph generation has become a central problem in modern machine learning, with broad applications in chemistry, biology, material science, and network analysis~\cite{deepgraphsurvey2022,schutt2017quantum,autoregressivediffusiongraph2024,ma2024exploring,molecularrnn2019}. 
% \guoqiang{more ref?}
Recent years have witnessed rapid progress in generative models for graphs, enabling the synthesis of complex relational structures with increasing fidelity and diversity.
% These advances have been driven by the development of powerful deep generative paradigms, including autoregressive models, variational methods, diffusion models, and more recently, flow-based approaches.

\noindent Graph-structured data is inherently discrete at its core: nodes and edges correspond to categorical entities and relational types, respectively. Accordingly, a prominent body of research centers on discrete graph generation, where the primary goal is to model probability distributions over valid node-edge configurations. Most existing methods leverage diffusion-based or flow-based frameworks—paradigms that enable principled likelihood-based training and improved scalability for large-scale graph dat. For example ~\cite{Vignac2022DiGress} proposed discrete-time discrete-state graph diffusion models. Subsequent works ~\cite {Xu2024DiSco,Siraudin2024Cometh} extended this line of research to continuous-time discrete-state graph diffusion. More recently, discrete flow-based graph generation models ~\cite{campbell2024generative,Qin2024DeFoG} have emerged, which are specifically designed to mitigate the computational inefficiencies inherent to graph diffusion models.  

\noindent For many real-world applications, including molecular generation, graph structure alone is insufficient: incorporating 3D geometric information has been shown to significantly improve generation quality, validity, and downstream performance, and equivariant architectures can further ensure geometric consistency under rigid transformations. However, existing flow and diffusion models~\cite{song2023equivariant,hoogeboom2022equivariant,liu2023flow,lipman2022flow} for discrete-continuous molecular generation often decouple topology from geometry, lacking cross-modal synchronized dynamics and inductive biases. This leads to physically inconsistent conformations and slow sampling, hindering their iterative generation process.

\noindent To address these drawbacks, we propose Equivariant MeanFlow (EQUIMF), a unified generative framework with $\mathbf{SE}(3)$ equivariance, which couples the modeling of discrete structural components and continuous geometric properties by leveraging synchronized MeanFlow dynamics. 
Specifically, we propose a new discrete MeanFlow model for efficient generation over discrete graph structures, leveraging a new, simple yet effective model parameterization.
Further, through a unified temporal alignment mechanism and mutual conditioning between structural and geometric representations, our approach enables the efficient generation of molecular structures in just a small number of steps.
Our results on a series of benchmarks show that EQUIMF consistently improves generation quality, physical validity, and sampling efficiency over almost 2x faster than state-of-the-art (SOTA) flow-matching and diffusion models. 
% \guoqiang{acceleration speed}

We summarize our main contributions as follows:
\begin{itemize}
    % \item Unified hybrid generation. We introduce a single framework that jointly models discrete graph structure and continuous 3D geometry under a synchronized time-bridge formulation.
    \item \textbf{New discrete MeanFlow}. We propose a new discrete MeanFlow model for discrete domains, which achieves efficient few-step sampling through a simple yet effective model parameterization strategy.
    \item \textbf{Hybrid MeanFlow with new mutual conditioning}. We propose a unified hybrid MeanFlow that jointly models discrete graph structures and continuous 3D geometries via a synchronized time-bridge and iterative mutual conditioning, supporting efficient sampling.
    \item \textbf{Equivariance-aware design and Empirical improvements}. We design an SE(3)-equivariant continuous head with theoretical symmetry guarantees, and our equivariant MeanFlow achieves superior performance on molecular generation benchmarks over SOTA flow/diffusion-based baselines.
    % \guoqiang{discrete meanflow}
\end{itemize}

\begin{figure*}[t]
    \centering
    \includegraphics[width=1.0\textwidth]{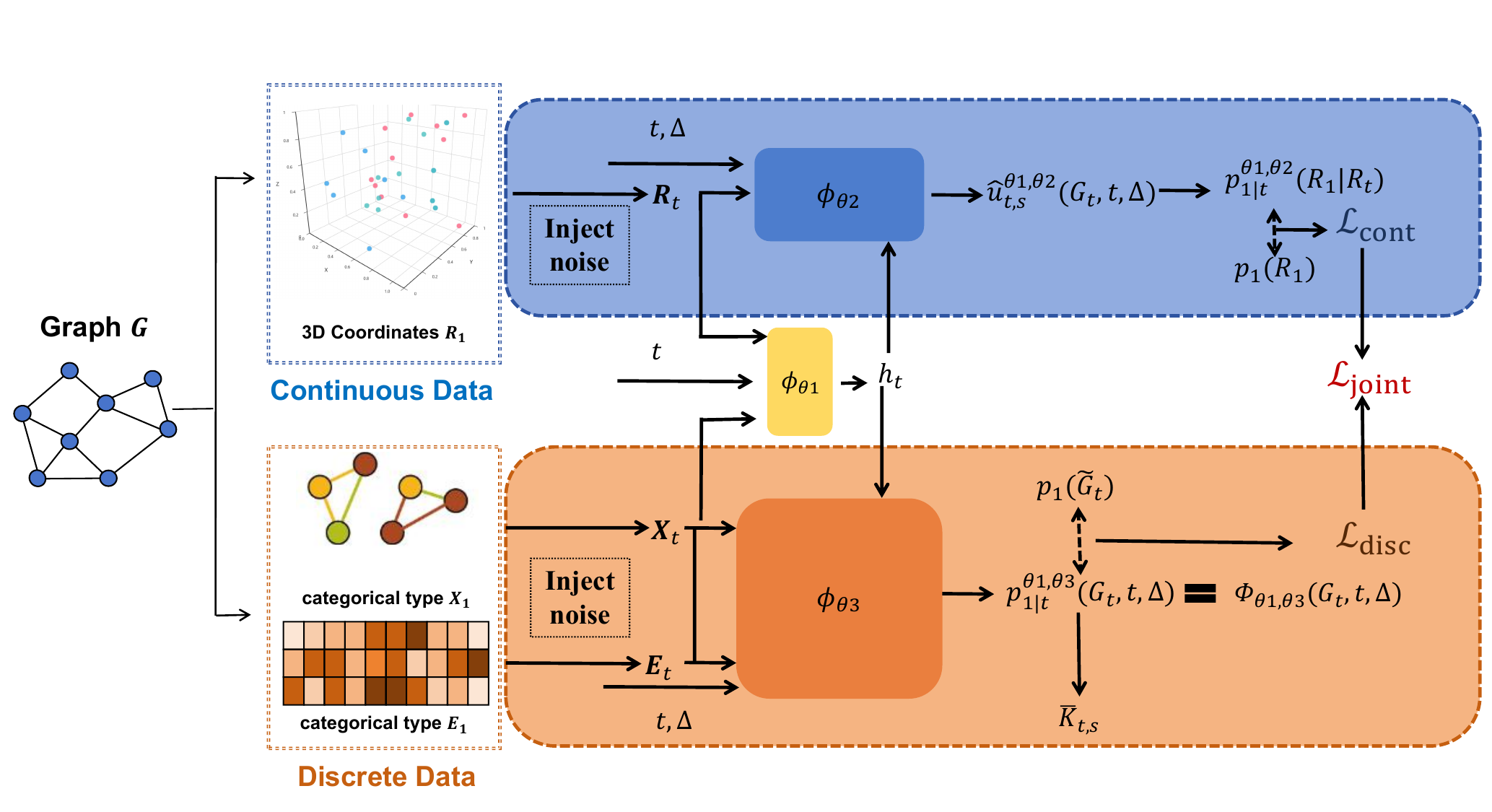}
    \caption{Architecture of Equivariant MeanFlow. This framework jointly models discrete graph structures ($\widetilde{\mathbf{G}} = (\mathbf{X}, \mathbf{E})$) and continuous 3D molecular geometries ($\mathbf{R}$) via a shared SE(3)-equivariant encoder $\phi_{\theta_1}$. The encoder’s outputs drive a discrete MeanFlow head $\phi_{\theta_3}$ and a continuous one $\phi_{\theta_2}$, which are optimized via a joint loss for mutually conditioned generation.}
    \label{fig:Structure}
\end{figure*}

\section{Preliminaries and Background}

\subsection{Problem Setup}

% \paragraph{Notation.}
\noindent\textbf{Notations.}
In this paper, a plain lowercase letter (e.g., $a$) represents a scalar, a bold
lowercase letter (e.g., $\mathbf{a}$) denotes a vector, and a bold uppercase
letter (e.g., $\mathbf{A}$) denotes a matrix.  
For a vector $\mathbf{a}$, its $i$-th entry is written as $a_i$.  
% For a matrix $\mathbf{A}$, we use $\mathbf{A}_{i,:}$ and $\mathbf{A}_{:,j}$ to refer to its $i$-th row and $j$-th column, and $A_{ij}$ for the $(i,j)$-th element.  
% Blackboard-bold symbols (e.g., $\mathbb{A}$) denote sets, and $|\mathbb{A}|$ is the cardinality of $\mathbb{A}$.  
Let $[n] = \{1,2,\dots,n\}$ be the index set of the first $n$ integers.
% The indicator of an event or condition $\mathcal{E}$ is denoted by $\mathbf{1}[\mathcal{E}]$, which equals $1$ if $\mathcal{E}$ holds and $0$ otherwise.

% \paragraph{Setting.}
\noindent\textbf{Problem Setup.}
We consider attributed (geometric) graphs with categorical node/edge types and 3D coordinates of all nodes.  
A graph with $n$ nodes is denoted by $\mathbf{G} = (\mathbf{X}, \mathbf{E}, \mathbf{R})$, 
% \[
%     \mathbf{G} = (\mathbf{V}, \mathbf{E}, \mathbf{X}),
% \]
% \begin{align*}
%     \mathbf{G} = (\mathbf{V}, \mathbf{E}, \mathbf{X}),
% \end{align*}
where $\mathbf{X} = \bigl(x^{(i)}\bigr)_{i \in [n]} \in [b]^{n}, \mathbf{E} = \bigl(e^{(ij)}\bigr)_{i,j \in [n]} \in [a+1]^{n \times n}$, and $\mathbf{R} = \bigl(\mathbf{r}^{(i)}\bigr)_{i \in [n]} \in \mathbb{R}^{n \times 3}$. \footnote{Note that here we denote nodes as $\mathbf{X}$ instead of $\mathbf{x}$ to be more compatible with $\mathbf{E}$ and $\mathbf{R}$, thereby helping easy readability.}
Here $a$ and $b$ denote the numbers of \emph{actual} edge and node types, respectively.
We treat the absence of an edge as a special edge type, so the total number of edge types is $a+1$.
The matrix $\mathbf{R}$ collects 3D coordinates of all nodes, where each row $\mathbf{r}^{(i)} \in \mathbb{R}^3$ represents the spatial position of node $i$. For simplicity, denote $\widetilde{\mathbf{G}} = (\mathbf{X}, \mathbf{E})$ for a (geometric) graph $\mathbf{G}$.
Given a training set $\mathcal{D} = \{\mathbf{G}^{(i)}\}_{i=1}^{N}$ with $N$ graphs, where each graph $\mathbf{G}^{(i)} \overset{\text{i.i.d.}}{\sim} p_{\text{data}}$, the goal is to learn a generative model (or generator) 
to match the (unknown) data distribution $p_{\text{data}}$ over graphs.

\subsection{Flow Matching For Graph Generation}
Flow Matching (FM)~\cite{lipman2022flow,albergo2023stochastic,liu2023flow} aims to learn how to transform a simple, easy-to-sample noise distribution \(p_0 = p_\epsilon\) into a target data distribution \(p_1 = p_{\text{data}}\). It is originally proposed for the generative modeling of continuous domains (e.g., $\mathbb{R}$) via an Ordinary Differential Equation (ODE) $\frac{\mathrm{d} z_t}{\mathrm{d} t} = u (z_t, t), \ t \in [0,1]$.

For discrete random variables $z \in [b]$ (e.g., graph node types), Discrete Flow Matching (DFM)~\cite{campbell2024generative,Qin2024DeFoG} provides an elegant framework by modeling the generation process as a Continuous-Time Markov Chain (CTMC), leading to the Kolmogorov equation, an ODE (a.k.a., probability flow): $\partial_t \mathbf{p}_t = \mathbf{K}_t^{\top} \mathbf{p}_t$, where $t \in [0, 1]$, $\mathbf{p}_t = [p_t(z_1=1,...,p(z_1=b)]$, and $\mathbf{K}_t(\cdot, \cdot)$ is the (instantaneous) transition rate matrix.
In the noisy process, DFM defines a deterministic noising trajectory that starts from a data point $z_1 \sim p_1$ and gradually interpolates to an initial noise distribution $p_0$ (typically a uniform distribution over the discrete domain), given by
% This trajectory is given by:
$$
p_{t|1}(z_t|z_1) = t\,\delta(z_t,z_1) + (1-t)\,p_0(z_t),
$$
where $\delta(z_t,z_1)$ is the Kronecker delta function ($\delta = 1$ if $z_t = z_1$ and $0$ otherwise).  When $t=1$, the distribution transforms to the original data point $z_1$; as $t$ decreases to $0$, it smoothly evolves to the initial noise $p_0$.

In the denoising process, to generate new samples, DFM considers a conditional transition rate matrix $\mathbf{K}_{t|1}(\cdot,\cdot|z_1)$ to reverse the noising trajectory, which can be defined as: 
\begin{align*}
    \mathbf{K}_{t|1}(z_t, z_{t+\mathrm{d}t} | z_1) = \frac{\text{ReLU}\left[ \partial_t p_{t|1}(z_{t+\mathrm{d}t} | z_1) - \partial_t p_{t|1}(z_t | z_1) \right]}{b \cdot p_{t|1}(z_t | z_1)}
\end{align*}
where $\text{ReLU} (x) = \max (0, x)$ and $\mathrm{d}t$ is next infinitesimal time.
Then, it has $\mathbf{K}_t(z_t, \cdot) = \mathbb{E}_{p_{1|t}(z_1 | z_t)} \left[ \mathbf{K}_{t|1}(z_t, \cdot | z_1) \right]$.
% $\mathbf{K}_{t|1}(\cdot,\cdot|z_1)$
% \guoqiang{condition?}that reverses the noising trajectory.
This rate matrix $\mathbf{K}_t$ governs the CTMC dynamics, which allows us to start from a sample drawn from $p_0$ and evolve it to recover the data distribution $p_{\text{data}}$, where the transition probability between discrete states is given by: 
\begin{align}
\label{eq:transition}
    p_{t + \mathrm{d}t|t}(z_{t+\mathrm{d}t}|z_t) = \delta(z_t,z_{t+\mathrm{d}t}) + \mathbf{K}_t(z_t,z_{t+\mathrm{d}t}) \mathrm{d}t,
\end{align}
\noindent where $\mathrm{d}t$ is replaced with a finite time interval $\Delta$ in practice.

\noindent Finally, to (implicitly) model the instantaneous rate matrix $\mathbf{K}_t$, DFM learns $p_{1|t}(z_1 | z_t)$ which is parameterized by a neural network $\phi (z_t, t)$. 

\noindent While DFM shows promise in molecular graph generation, it has two critical limitations.
First, it (implicitly) models the instantaneous rate matrix, which requires numerous fine-grained time steps, leading to low sampling efficiency. 
Second, most DFM-based models only focus on discrete graph structures, ignoring continuous 3D geometric information and producing geometrically inconsistent molecules. 
These challenges motivate our unified generative framework, which accelerates sampling while jointly modeling discrete structure and continuous geometry.

\subsection{Equivariance}

Let \(G\) be a group acting on an input space \(\mathcal{X}\) and an output space
\(\mathcal{Y}\).  
For each \(g \in G\), let \(\mathbf{S}_g : \mathcal{X} \to \mathcal{X}\) and
\(\mathbf{T}_g : \mathcal{Y} \to \mathcal{Y}\) denote the corresponding linear
representation operators.  
A function \(f : \mathcal{X} \to \mathcal{Y}\) is said to be \emph{equivariant} to the
action of \(G\) if
$\mathbf{T}_g\bigl(f(\mathbf{x})\bigr) = f\bigl(\mathbf{S}_g(\mathbf{x})\bigr), \forall g \in G,\ \mathbf{x} \in \mathcal{X}$.
If \(\mathbf{T}_g\) is the identity mapping for all $g$, the function $f$ is \emph{invariant} to the group action.

\noindent In molecular and geometric modeling, the relevant symmetry group is the
Euclidean group $\mathbf{SE}(3)$, generated by translations, rotations, and reflections in $\mathbb{R}^3$. 
An element $g = (\mathbf{Q}, \mathbf{a}) \in \mathbf{SE}(3)$ acts on a point $\mathbf{r} \in \mathbb{R}^3$ as \( \mathbf{S}_g(\mathbf{r})\) = $\mathbf{Q} \mathbf{r} + \mathbf{a}$, 
where $\mathbf{Q} \in \mathbb{R}^{3 \times 3}$ is an orthogonal matrix ($\mathbf{Q}^\top \mathbf{Q} = \mathbf{I}$) and $\mathbf{a} \in \mathbb{R}^3$ is a translation vector. 
% \guoqiang{problem with the notation $R$ and $a$? $Q$ denotes orthogonal matrix?}
For the function $f$ that outputs geometric quantities in $\mathbb{R}^3$, $\mathbf{SE}(3)$-equivariance requires 
$f(\mathbf{Q} \mathbf{r} + \mathbf{a}) = \mathbf{Q} f(\mathbf{r}) + \mathbf{a}$,which is the modeling core.
To model equivariant distributions over molecular graphs within our MeanFlow framework, 
we adopt widely-used \emph{E(3)-Equivariant Graph Neural Networks} (EGNNs)~\cite{garcia2021en}, a type of graph neural network that satisfies the equivariance constraint as our backbone.

\section{Method}
% \section{Equivariant Continuous and Discrete Meanflow}
In this section, we introduce the Equivariant MeanFlow (EQUIMF), a unified $\textbf{SE}(3)$-equivariant generative framework that jointly models discrete and continuous domains via synchronized MeanFlow dynamics. 

\subsection{Noising and Denoising Process with Unified Time}\label{sec:4.1}
We model the dynamics of graph $\mathbf{G}_t=(\mathbf{X}_t,\mathbf{E}_t,\mathbf{R}_t)$ with a synchronized time trajectory, where discrete structure $\widetilde{\mathbf{G}}_t = (\mathbf{X}_t, \mathbf{E}_t)$, and continuous geometry $\mathbf{R}_t$ are constructed by discrete and continuous MeanFlow respectively, which we will discuss in details in subsequent sections.

% \noindent\textbf{Noising Process}.
% For molecular graphs, a synchronized noising process jointly perturbs the discrete structure and continuous geometry with $p_{t|1}(\mathbf{G}_t | \mathbf{G}_1 ) = p_{t|1}(\widetilde{\mathbf{G}}_t | \widetilde{\mathbf{G}}_1 ) p_{t|1}(\mathbf{R}_t | \mathbf{R}_1)$, as follows.

\noindent\textbf{Noising Process}.
For molecular graphs, a synchronized noising process jointly perturbs the discrete structure and continuous geometry, with
\begin{align*}
p_{t|1}(\mathbf{G}_t \mid \mathbf{G}_1)
= p_{t|1}(\widetilde{\mathbf{G}}_t \mid \widetilde{\mathbf{G}}_1) \, p_{t|1}(\mathbf{R}_t \mid \mathbf{R}_1),
\end{align*}
as follows.
% We construct a synchronized forward noising process for molecular graphs with $p_{t|1}(\mathbf{G}_t | \mathbf{G}_1 ) = p_{t|1}(\widetilde{\mathbf{G}}_t | \widetilde{\mathbf{G}}_1 ) p_{t|1}(\mathbf{R}_t | \mathbf{R}_1)$, which jointly perturbs the discrete structure and continuous geometry as follows.
% \begin{itemize}
    % \item \textbf{Sample Graph Data}: Sample a complete molecular graph from our dataset, consisting of discrete node features $\mathbf{X}_1$, edge features $\mathbf{E}_1$, and continuous atomic coordinates $\mathbf{R}_1$:
    % $$
    % (\mathbf{X}_1, \mathbf{E}_1, \mathbf{R}_1) \sim p_1.
    % $$
    
    \noindent 1.~\textbf{Sample Time Pair}. Uniformly sample a base time step $t \sim \mathcal{U}(0, 1)$ and a small time interval $\Delta$, such that the subsequent time step $s = t + \Delta$ satisfies $s \le 1$.
    To guarantee this, we enforce $\Delta \in \left[\Delta_{\min}, 1-t\right]$, where $\Delta_{\min}$ is a hyperparameter controlling the minimum interval length.

    \noindent 2.~\textbf{Inject Noise into Discrete Structure}.
    For discrete structure $\widetilde{\mathbf{G}}_t$, we inject noise via DFM trajectory, which interpolates between the data point and a noise distribution $p_0$:
    \vspace{-2pt}
    \begin{align*}
        p_{t|1}(\widetilde{\mathbf{G}}_t  | \widetilde{\mathbf{G}}_1 )  =   
        \prod_{i \in [N]} p_{t|1}\left(x_t^{(i)} | x_1^{(i)}\right) \prod_{i<j} p_{t|1}\left(e_t^{(ij)} | e_1^{(ij)}\right),
    \end{align*}
    where for each node $i \in [N]$, $p_{t|1}\left(x_t^{(i)} | x_1^{(i)}\right) = t \delta\left(x_t^{(i)}, x_1^{(i)}\right) + (1-t) p_0\left(x_t^{(i)}\right)$, and $p_{t|1}\left(e_t^{(ij)} | e_1^{(ij)}\right)$ is similarly defined for each edge $e_t^{(ij)}$. 
    
    \noindent 3.~\textbf{Inject Noise into Continuous Coordinates}.
    For the continuous atomic coordinates $\mathbf{R}_t$ (or $\mathbf{R}_s$), we inject noise using a linear interpolation schedule:
    \begin{align*}
    p_{t|1}(\mathbf{R}_t | \mathbf{R}_1) &= \prod_{i \in [N]} p_{t|1} \left(\mathbf{r}_t^{(i)} | \mathbf{r}_1^{(i)}\right) ,
    \end{align*}
    where for each $i \in [N]$, $\mathbf{r}_t^{(i)} = t \mathbf{r}_1^{(i)} + (1 - t) \mathbf{\epsilon}, \ \mathbf{\epsilon} \sim \mathcal{N}(\mathbf{0}, I)$. Similarly, we define the counterpart $p_{s|1}(\mathbf{R}_s | \mathbf{R}_1)$ of $\mathbf{R}_s$.

\noindent\textbf{Denoising Process with New Conditional Generation}.
% \begin{align}
%     p_{1|t}^{\theta} (\mathbf{G}_1 | \mathbf{G}_t) = p_{1|t}^{\theta_1, \theta_3} (\widetilde{\mathbf{G}}_1 | \mathbf{G}_t) p_{1|t}^{\theta_1, \theta_2} (\mathbf{R}_1 | \mathbf{G}_t) 
% \end{align}
In the denoising (or sampling) process, we model the distribution $p_{1|t} (\mathbf{G}_1 | \mathbf{G}_t)$ according to the following formula:
\begin{align}
\label{eq:condition_key}
    \hat{p}_{1|t} (\mathbf{G}_1 | \mathbf{G}_t) := \hat{p}_{1|t} (\widetilde{\mathbf{G}}_1 | \mathbf{G}_t) \hat{p}_{1|t} (\mathbf{R}_1 | \mathbf{G}_t) .
\end{align}
Note that this is different from the following decomposition in~\cite{campbell2024generative}:
\begin{align}
    \hat{p}_{1|t} (\mathbf{G}_1 | \mathbf{G}_t) := \hat{p}_{1|t} (\widetilde{\mathbf{G}}_1 | \widetilde{\mathbf{G}}_t) \hat{p}_{1|t} (\mathbf{R}_1 | \mathbf{R}_t) ,
\end{align}
which denoises the discrete part $\widetilde{\mathbf{G}}$ and continuous one $\mathbf{R}$ independently, except with shared time $t$. 
In contrast, our modeling approach allows each part to evolve not only conditioned on its own information of the preceding time but also on that of the other part. 
Intuitively, this can enable us to better keep the overall harmony of (geometric) graphs, including discrete structure and continuous geometry, which is also verified by our experimental results (see Sec.~\ref{sec:experiments}).
% For the convenience of discussions, we explicitly express the parameters related to the above parameterized distributions.

\noindent We parameterized the above distributions by neural networks, where $\theta_1$ denotes shared parameters for modeling both discrete and continuous parts, and $\theta_2$ denotes unique parameters for the continuous part, while $\theta_2$ denotes those of the discrete one, and $\theta = (\theta_1, \theta_2, \theta_3)$. We will detail it in the subsequent sections (see Fig.~\ref{fig:Structure} for the overall architecture).

% \guoqiang{clear parameter} 

% logic: representation (or modelling) - noising - denoising - train - sampling - architecture - mutual condition

\subsection{Equivariant MeanFlow for Continuous Geometry}\label{sec:4.3}
% \guoqiang{need to revise? This section is not clear.}
We introduce a conditional generative model for continuous graph structures based on the continuous MeanFlow~\cite{geng2025mean}, which defines the model parameterization to encode an average velocity field, in contrast to the instantaneous velocity fields typically employed in flow-based generative models. This allows us to frame the generative process as a trajectory governed by an ODE
$\dot{{\mathbf{{R}}}}_t = {u}_t((\widetilde{\mathbf{G}}_t, \mathbf{R}_t), t) = {u}_t(\mathbf{G}_t, t)$.~\footnote{Notably, we vectorize the matrix $\mathbf{R}_t$ and here we can view it as a vector.}
% $\dot{{\mathbf{{R}}}}_t ={u}_t(\mathbf{R_t}|\mathbf{R_1},\mathbf{R_0})$.
% In this section, we detail the MeanFlow framework for modeling \textbf{SE}(3)-equivariant continuous atomic coordinates in molecular graph generation. MeanFlow learns an averaged velocity field based on the instantaneous velocity field. It is originally proposed for the generative modeling of continuous domains via an Ordinary Differential Equation (ODE) $\frac{\mathrm{d} \mathbf{R}_t}{\mathrm{d} t} = u (\mathbf{R}_t, t)$.

% We obtain pairs of intermediate coordinate states \( \mathbf{R}_t, \mathbf{R}_s \in \mathbb{R}^{N \times 3} \), where \( s = t + \Delta \), for the same molecule at times \(t\) and \(s\), respectively.
\noindent MeanFlow learns an averaged velocity field based on the instantaneous velocity field, 
where the target average velocity ${u}_{t,s}(\mathbf{G}_t, t, s)$  between the interval $[t,s]$ is defined as:
\begin{align*}
    {u}_{t,s}(\mathbf{G}_t, t, s) = \frac{1}{s-t}\int_t^{s} {u}_\beta(\mathbf{G}_\beta, \beta) \mathrm{d} \beta.    
\end{align*}
Then, we train a parameterized neural network $\hat{{u}}_{t,s}^{\theta_1,\theta_2}(\mathbf{G}_t, t, \Delta)$ to approximate the target average velocity field with $s = t + \Delta$, where the training loss is
\begin{align}
\label{eq:cont_loss}
    \mathcal{L}_{\text{cont}} & (\theta_1, \theta_2) =  \mathbb{E}_{\mathbf{R}_1,\mathbf{R}_0,t,s} \bigg[ w(t,s) \times \notag \\
    & \left\| \hat{{u}}^{\theta_1, \theta_2}_{t,s}(\mathbf{G}_t, t, \Delta) - \text{sg}\left({u}_{t,s}^{\text{tgt}}(\mathbf{G}_t, t, \Delta) \right) \right\|^2 \bigg], 
\end{align}
where $w(t,s)$ is a sampling weight, $\text{sg}$ denotes a stop-gradient operation, and the target average velocity ${u}_{t,s}^{\text{tgt}}$ is
% the ${u}_{t,s}^{\text{tgt}}(\mathbf{G}_t,\Delta)$ is defined as:
\begin{align*}
&{u}_{t,s}^{\text{tgt}}(\mathbf{G}_t, t, \Delta) := u_t(\mathbf{G}_t, t) + \Delta \bigg[ \\
& \nabla_{\mathbf{R}_t} \hat{{u}}^{\theta_1, \theta_2}_{t,s}(\mathbf{G}_t, t, \Delta) \cdot u_t(\mathbf{G}_t, t) + \partial_t \hat{{u}}^{\theta_1, \theta_2}_{t,s}(\mathbf{G}_t, t, \Delta) \bigg],
\end{align*}
% with \text{sg} denoting a stop-gradient operation. The $w(t,s)$ is a sampling weight over time pairs $(t,s)$. 
which can be efficiently computed by the Jacobian-vector products (JVP). Based on the approximate average velocity, we can transport $\mathbf{R}_t$ to $\mathbf{R}_s$ by 
\begin{align}
\label{eq:for_sample}
    \mathbf{R}_s = \mathbf{R}_t + \Delta \times \hat{{u}}^{\theta_1, \theta_2}_{t,s}(\mathbf{G}_t, t, \Delta).
\end{align}
For convenience, we denote the induced corresponding distribution by $\hat{{u}}^{\theta_1, \theta_2}_{t,s}(\mathbf{G}_t, t, \Delta)$ as $p_{1|t}^{\theta_1, \theta_2} (\mathbf{R}_1 | \mathbf{G}_t)$.

\noindent\textbf{Equivariance}. Our parameterized neural network is based on the EGNN~\cite{garcia2021en},  a type of graph neural network that satisfies the equivariance constraint as our backbone. Therefore, our model keeps the equivariance.

\subsection{Discrete MeanFlow for Molecular Graph Structure}
\label{sec:4.4}

Here, we propose a new discrete MeanFlow model to generate discrete graph structures, where the core idea is to provide a new model parameterization for modeling the average transition rate matrix rather than the instantaneous one in discrete flow matching (DFM). 
For simplicity, we discuss only one dimension $x^{(i)} \in [b]$ of discrete node structures $\mathbf{X}$ and omit the similar edge ones $\mathbf{E}$.  

\noindent Recall that in DFM, the temporal evolution of the marginal distribution over discrete (node) states is characterized by an ODE that governs the conservation law of probability mass: $\partial_t \mathbf{p}_t^{(i)} = {\mathbf{K}_t^{(i)}}^{\top} \mathbf{p}_t^{(i)}$, where $\mathbf{p}_t^{(i)} = [p_t(x^{(i)} = 1), \dots, p_t(x^{(i)} = b)]$, and $\mathbf{K}_t^{(i)} \in \mathbb{R}^{b \times b}$ is the instantaneous transition rate matrix.
% \begin{align*}
%     \partial_t \mathbf{p}_t = \mathbf{K}_t^{\top} \mathbf{p}_t
% \end{align*}
In the practical denoising (or sampling) process, the transition probability between discrete states is given by Eq.~\eqref{eq:transition} with the finite time interval $\Delta$. 

\noindent\textbf{New Parameterization}.
In DFM, it learns $p_{1|t}$ parameterized by a neural network $\phi (\mathbf{G}_t, t)$, to (implicitly) model the instantaneous rate matrix $\mathbf{K}_t^{(i)}$. 
However, this requires an extremely small time interval $\Delta$, leading to a low convergence rate and sampling efficiency. Intuitively, a finite time interval can introduce an average transition rate matrix as 
% which might accelerate the sampling process, with its definition as 
\begin{align*}
\bar{\mathbf{K}}_{t,s}^{(i)} (\cdot, \cdot) := \frac{1}{\Delta} \int_t^s \mathbf{K}_{\beta}^{(i)} (\cdot, \cdot)  d\beta.
\end{align*}
If this average matrix could be modeled well, it is expected to use a larger time interval than that of the instantaneous one, thereby accelerating the sampling process.
Therefore, based on this insight, to model the average rate matrix $\bar{\mathbf{K}}_{t,s}^{(i)}$, we learn $p_{1|t}$ parameterized by a neural network $\phi(\mathbf{G}_t, t, \Delta)$ with additional input $\Delta$ compared with the one in DFM. Similarly to DFM, the training loss with a weighting $\lambda$ is
% \begin{align}
% \label{eq:disc_loss}
%     \mathcal{L}_{\text{disc}} = &  \mathbb{E}_{t, p_1(\widetilde{\mathbf{G}}_1),  p_{t|1}(\widetilde{\mathbf{G}}_t|\widetilde{\mathbf{G}}_1)} 
%     % \text{CE}_\lambda\!\left(\widetilde{\mathbf{G}}_1, p_{1|t}^\theta(\cdot | \widetilde{\mathbf{G}}_t)\right),
%     \Bigg[ - \sum_{i} \log \left(p_{1|t}^{\theta,(i)}\left(x_1^{(i)} | \mathbf{G}_t\right)\right) \notag \\
%     & - \lambda \sum_{i<j} \log \left(p_{1|t}^{\theta,(ij)}\left(e_1^{(ij)} | \mathbf{G}_t\right)\right) \Bigg].
% \end{align}
\begin{align}
\label{eq:disc_loss}
    \mathcal{L}_{\text{disc}} (\theta_1, \theta_3) & =   \mathbb{E}_{t, p_1(\mathbf{G}_1),  p_{t|1}(\mathbf{G}_t | \mathbf{G}_1)} 
    % \text{CE}_\lambda\!\left(\widetilde{\mathbf{G}}_1, p_{1|t}^\theta(\cdot | \widetilde{\mathbf{G}}_t)\right),
    \Big[ - \log p_{1|t}^{\theta_1, \theta_3}\left(  \mathbf{X}_1 | \mathbf{G}_t\right) \notag \\
    & \qquad \quad - \lambda \log p_{1|t}^{\theta_1, \theta_3}\left( \mathbf{E}_1 | \mathbf{G}_t\right) \Big] ,
\end{align}
where $p_{1|t}^{\theta_1, \theta_3}\left( \mathbf{X}_1 | \mathbf{G}_t\right) = \prod_i p_{1|t}^{\theta_1, \theta_3}\left(x_1^{(i)} | \mathbf{G}_t\right)$ and $p_{1|t}^{\theta_1, \theta_3}\left( \mathbf{E}_1 | \mathbf{G}_t\right) = \prod_{i \neq j} p_{1|t}^{\theta_1, \theta_3}\left( e_1^{(ij)} | \mathbf{G}_t \right)$.
\newcommand{\algcomment}[1]{
  \hfill
  \textcolor{gray!60!black}{$\triangleright$ \footnotesize #1}%
}
\begin{algorithm}[H]
\caption{Training Algorithm of EQUIMF}
\label{alg:train}
\begin{algorithmic}[1]
\REQUIRE Graph dataset \( \mathcal{D} = \{\mathbf{G}_1, \dots, \mathbf{G}_N\} \).
\STATE \textbf{Initialization}: $\theta, \Delta_{\min}$ 
% \guoqiang{?}
\WHILE{$\theta$ not converged}
    \STATE Sampling $\mathbf{G}_1 \sim \mathcal{D}$, $t \sim \mathcal{U}(0,1)$, $\Delta \sim \mathcal{U}(\Delta_{\min},1-t)$
    \STATE Sampling $\widetilde{\mathbf{G}}_t \sim p_{t|1}(\widetilde{\mathbf{G}}_t  | \widetilde{\mathbf{G}}_1 )$  \algcomment{Noising}
    \STATE Sampling $\mathbf{R}_t \sim p_{t|1}(\mathbf{R}_t | \mathbf{R}_1)$ \algcomment{Noising}
    % \STATE $\mathbf{h}_t^{\text{disc}},\mathbf{h}_t^{\text{cont}} \leftarrow \phi_{\theta_1}(\mathbf{G}_t, t)$
    % \STATE \(\hat{\mathbf{u}}_{t\to s} \leftarrow f_{\text{cont}}(\mathbf{h}_t^{\text{cont}}, \mathbf{R}_t, \Delta) \in \mathbb{R}^{N\times 3}\)
    % \STATE \( \mathbf{\bar{K}}_{t \to s} \leftarrow f_{\text{disc}}(\mathbf{h}_t^{\text{disc}}, \Delta) \)
    \STATE $p_{1|t}^{\theta}(\cdot | \mathbf{R}_t) \leftarrow \phi_{\theta_1,\theta_2}({\mathbf{G}}_t, \Delta)$  \algcomment{Denoising Prediction}
    \STATE $p_{1|t}^{\theta}(\cdot | \widetilde{\mathbf{G}}_t) \leftarrow \phi_{\theta_1,\theta_3}(\mathbf{G}_t,\Delta)$ \algcomment{Denoising Prediction}
    \STATE $\mathcal{L}_{\text{cont}} \leftarrow$ Eq.~\eqref{eq:cont_loss}
    \STATE $\mathcal{L}_{\text{disc}} \leftarrow$ Eq.~\eqref{eq:disc_loss}
    \STATE $\mathcal{L}_{\text{joint}} \leftarrow$ Eq.~\eqref{eq:joint_loss}
    \STATE $\theta \leftarrow \text{Optimize}(\mathcal{L}_{\text{joint}})$
\ENDWHILE

\STATE \textbf{Return} $\theta$
\end{algorithmic}
\end{algorithm}

\begin{algorithm}[H]
\caption{Sampling Algorithm of EQUIMF}
\label{alg:sampling}
\begin{algorithmic}[1]
\REQUIRE Trained models $\phi_{\theta_1}, \phi_{\theta_2}, \phi_{\theta_3}$.
\ENSURE Generated graph \( \mathbf{G}_1 = (\mathbf{X}_1, \mathbf{E}_1, \mathbf{R}_1) \)
\STATE Sampling noised data: \( \mathbf{X}_0, \mathbf{E}_0 \sim p_0 \); \( \mathbf{R}_0 \sim \mathcal{N}(0, I) \);
\STATE \( t \leftarrow 0 \)
\FOR{$t = 0$ to $1 - \Delta$ with step $\Delta$}
    \STATE \( s \leftarrow t + \Delta\)
    % \STATE \( \mathbf{h}_t^{\text{disc}}, \mathbf{h}_t^{\text{cont}} \leftarrow \phi_\theta(\mathbf{G}_t, t) \)
    \STATE $\hat{{u}}_{t,s}^{\theta_1,\theta_2}(\mathbf{G}_t, t, \Delta) \leftarrow \phi_{\theta_1,\theta_2}({\mathbf{G}}_t, \Delta)$  \algcomment{Prediction}
    \STATE $p_{1|t}^{\theta}(\cdot | \widetilde{\mathbf{G}}_t) \leftarrow \phi_{\theta_1,\theta_3}(\{\mathbf{G}_t, \Delta )$   \algcomment{Prediction}
    \STATE \(\mathbf{R}_s \leftarrow \) Calculate with Eq.~\eqref{eq:for_sample} \algcomment{Denoising}
    \STATE $ \widetilde{\mathbf{G}}_s \leftarrow $ Sample from $p_{s|t}^{\theta}(\widetilde{\mathbf{G}}_s | \widetilde{\mathbf{G}}_t)$ \algcomment{Denoising}
    \STATE \( t \leftarrow s \)
\ENDFOR

\STATE \textbf{Return} $\mathbf{G}_1$
\end{algorithmic}
\end{algorithm}

\subsection{Joint Training Objective with Mutual Conditioning}\label{sec:4.5}
% \paragraph{Sampling}
To enable tight coupling between discrete graph structure and continuous 3D geometry, we introduce a shared \(\mathbf{SE}\)(3)-equivariant backbone encoder $\phi_{\theta_1}$ that unifies the feature extraction and cross-modal information fusion for both generation heads (parameterized by $\theta_2$ and $\theta_3$).
The joint training objective combines the task-specific losses of both heads, with weight hyperparameters $\lambda_{\text{disc}}$ and $\lambda_{\text{cont}} > 0$ to balance their contributions: 
% \guoqiang{$\theta$ not cover all parameters?}
\begin{equation}\label{eq:joint_loss}
\mathcal{L}_{\text{joint}}(\theta) = \lambda_{\text{disc}} \mathcal{L}_{\text{disc}}(\theta_1, \theta_3) + \lambda_{\text{cont}} \mathcal{L}_{\text{cont}}(\theta_1, \theta_2) ,
\end{equation}
where $\theta = (\theta_1,\theta_2,\theta_3)$.

\noindent\textbf{Time distortion}. Besides, we adopt a time distortion strategy, similar to the one proposed in~\cite{Qin2024DeFoG}. The key idea is to apply a non-uniform time step discretization during the sampling process, particularly emphasizing critical regions where fine-grained control is needed. 
The detailed implementation refers to \hyperref[app:appendixD]{Appendix D}. 
% \guoqiang{train?}
% \paragraph{Sampling}
% \guoqiang{ref to algorithm}

\noindent\textbf{Shared Representation Encoder}.
To enable tight coupling between discrete graph structure and continuous 3D geometry, we introduce a shared \(\textbf{SE}\)(3)-equivariant backbone encoder $\phi_{\theta_1}$ that unifies the feature extraction and cross-modal information fusion for both generation heads.
The encoder takes $\mathbf{G}_t$ and a time embedding $\tau = \text{MLP}_t(t)$ as input and generates a unified representation ${ \mathbf{h}_t = \phi_{\theta_1} \left(\mathbf{G}_t, t\right)}$, which is a condition for their individual generation process. 
This design enables the discrete graph and continuous geometry generation tasks to be mutually conditioned on the same information-rich latent representation, laying the foundation for joint generation. (See details in \hyperref[app:appendixF]{Appendix F}).

Overall, the training and sampling processes of our proposed EQUIMF are summarized in Algorithm~\ref{alg:train} and~\ref{alg:sampling}. 

\begin{table*}[t]
\centering
\caption{Results of atom stability, molecule stability, validity, and validity$\times$uniqueness.
A higher number indicates a better generation quality. The results marked with an asterisk were obtained from our own tests.}
\label{tab:qm9_metrics}
\footnotesize
\setlength{\tabcolsep}{2pt} 
\resizebox{\linewidth}{!}{
\begin{tabular}{lcccccc}
\toprule
\multirow{2}{*}{\# Metrics} & \multicolumn{4}{c}{\textbf{QM9}} & \multicolumn{2}{c}{\textbf{DRUG}}\\
\cmidrule(lr){2-5}\cmidrule(lr){6-7} 
& Atom Sta (\%) & Mol Sta (\%) & Valid (\%) & Valid \& Unique (\%) & Atom Sta (\%) & Valid (\%) \\
\midrule
Data        & 99.0 & 95.2 & 97.7 & 97.7 & 86.5 & 99.9 \\
\midrule
ENF \cite{satorras2021en}        & 85.0 &  4.9 & 40.2 & 39.4 & --   & --   \\
G-Schnet \cite{gebauer2019symmetry}    & 95.7 & 68.1 & 85.5 & 80.3 & --   & --   \\
GDM \cite{hoogeboom2022equivariant}   & 97.0 & 63.2 &  --  &  --  & 75.0 & 90.8 \\
GDM-AUG \cite{hoogeboom2022equivariant}     & 97.6 & 71.6 & 90.4 & 89.5 & 77.7 & 91.8 \\
EDM \cite{hoogeboom2022equivariant}         & 98.7 & 82.0 & 91.9 & 90.7 & 81.3 & 92.6 \\
EDM-Bridge \cite{wu2022diffusion}  & 98.8 & 84.6 & 92.0$^\ast$ & 90.7 & 82.4 & 92.8$^\ast$ \\
EQUIFM  \cite{song2023equivariant}     & {98.9 $\pm$ 0.1} & {88.0 $\pm$ 0.3} & {94.2 $\pm$ 0.2} & {93.2 $\pm$ 0.2} & {84.2} & \textbf{98.9}\\
\midrule
\textbf{EQUIMF} (ours) & \textbf{98.9 $\pm$ 0.1} & \textbf{93.0 $\pm$ 0.2} & \textbf{95.8 $\pm$ 0.4} & \textbf{95.0 $\pm$ 0.3} & \textbf{84.5} & {98.7} \\
\bottomrule
\end{tabular}
}% 闭合resizebox
\end{table*}

\subsection{Theoretical analysis}\label{sec:4.8}
In this section, we analyze the equivariance property of our proposed equivariant Meanflow (EQUIMF) generative model,  which is formally stated as follows, where the full proof is in~\hyperref[app:appendixA]{Appendix A}.
\begin{proposition}[Equivariance of EQUIMF]\label{Prop:4.4}
Assume that (i) the nodes and edges features is $\text{SE}(3)$-invariant, (ii) the average velocity field of the continuous head is $\text{SE}(3)$-equivariant, and (iii) the rate matrix of the discrete head is $\text{SE}(3)$-equivariant. Then, the whole generation process of our proposed EQUIMF is $\text{SE}(3)$-equivariant. 
% The detailed implementation and supplementary analysis are given in \hyperref[app:appendixA]{Appendix A}.
\end{proposition}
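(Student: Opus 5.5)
We argue by induction over the discrete sampling grid of Algorithm~\ref{alg:sampling}, keeping track of a single joint statement. Fix $g=(\mathbf{Q},\mathbf{a})\in\mathbf{SE}(3)$. Write $g\cdot\mathbf{G}=(\mathbf{X},\mathbf{E},\mathbf{R}\mathbf{Q}^\top+\mathbf{1}\mathbf{a}^\top)$; by assumption (i) the group acts trivially on $\widetilde{\mathbf{G}}$. We interpret ``the generation process is $\mathbf{SE}(3)$-equivariant'' in the standard sense of~\cite{hoogeboom2022equivariant,garcia2021en}: if we denote by $p_t$ the marginal law of $\mathbf{G}_t$ produced by the sampler, then $p_t(g\cdot\mathbf{G})=p_t(\mathbf{G})$ for every $t$ on the grid. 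Equivalently, the one-step transition kernel $p_{s|t}(\mathbf{G}_s\mid\mathbf{G}_t)$ satisfies $p_{s|t}(g\cdot\mathbf{G}_s\mid g\cdot\mathbf{G}_t)=p_{s|t}(\mathbf{G}_s\mid\mathbf{G}_t)$.

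\textbf{Base case and translations.} We first dispose of translations, since a Gaussian $\mathcal{N}(\mathbf{0},I)$ on $\mathbb{R}^{n\times 3}$ is rotation invariant but not translation invariant. Following EDM, we work on the zero center-of-mass subspace. The initial noise $\mathbf{R}_0$ is projected to zero mean, the network output is mean-subtracted, and translation invariance becomes invariance of this projected dynamics, with $\mathbf{a}$ absorbed by recentring. On that subspace, $\mathcal{N}(\mathbf{0},I)$ is invariant under $\mathbf{Q}$ because $\mathbf{Q}$ is orthogonal. The discrete noise $p_0$ on $\mathbf{X}_0,\mathbf{E}_0$ is independent of coordinates. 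Hence $p_0$ is invariant.

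\textbf{Inductive step.} Equation~\eqref{eq:condition_key} and Algorithm~\ref{alg:sampling} show that, given $\mathbf{G}_t$, the step factorizes into a deterministic coordinate update and an independent discrete draw:
\begin{align*}
p_{s|t}(\mathbf{G}_s\mid\mathbf{G}_t)=\delta\bigl(\mathbf{R}_s-\mathbf{R}_t-\Delta\,\hat{u}_{t,s}(\mathbf{G}_t,t,\Delta)\bigr)\,p_{s|t}(\widetilde{\mathbf{G}}_s\mid\mathbf{G}_t).
\end{align*}
For the continuous factor we use (ii), namely $\hat{u}_{t,s}(g\cdot\mathbf{G}_t)=\hat{u}_{t,s}(\mathbf{G}_t)\mathbf{Q}^\top$. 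A velocity is a difference of positions, so it rotates but does not translate. It follows that $g\cdot\mathbf{R}_t+\Delta\hat{u}(g\cdot\mathbf{G}_t)=g\cdot\mathbf{R}_s$ by Eq.~\eqref{eq:for_sample}. Because $\mathbf{Q}$ has $|\det\mathbf{Q}|=1$, the Dirac factor is preserved. For the discrete factor, the kernel is built from Eq.~\eqref{eq:transition} via $\delta+\bar{\mathbf{K}}_{t,s}\Delta$, where $\bar{\mathbf{K}}$ is computed from $p_{1|t}^{\theta}(\cdot\mid\mathbf{G}_t)$. We read (iii) as saying that this rate matrix, viewed as a function of $\mathbf{G}_t$, satisfies $\bar{\mathbf{K}}(g\cdot\mathbf{G}_t)=\bar{\mathbf{K}}(\mathbf{G}_t)$, since $g$ acts trivially on the categorical output space. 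The discrete factor is therefore invariant. Taking the product, the kernel is equivariant. The marginal at time $s$ is then
\begin{align*}
p_s(g\cdot\mathbf{G}_s)=\int p_{s|t}(g\cdot\mathbf{G}_s\mid\mathbf{G}_t)\,p_t(\mathbf{G}_t)\,\mathrm{d}\mathbf{G}_t .
\end{align*}
We change variables $\mathbf{G}_t=g\cdot\mathbf{G}_t'$; the Jacobian is $1$ because the action is an isometry. Applying kernel equivariance together with the induction hypothesis gives $p_s(g\cdot\mathbf{G}_s)=p_s(\mathbf{G}_s)$. Iterating to $t=1$, and noting that the time-distortion schedule only reparametrizes the grid independently of $\mathbf{G}$, yields the claim.

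The main obstacle is making the hypotheses precise enough to be used. First, (iii) must mean \emph{invariance} of the categorical output, and this has to be justified from the EGNN backbone: its scalar channels $\mathbf{h}_t=\phi_{\theta_1}(\mathbf{G}_t,t)$ depend on $\mathbf{R}_t$ only through $\|\mathbf{r}^{(i)}-\mathbf{r}^{(j)}\|$, and the discrete head is fed only these channels. Second, the translation issue must be handled carefully. I would state both points explicitly as the interpretation of (ii)--(iii) and prove the needed EGNN facts by a short layerwise induction, citing~\cite{garcia2021en}.
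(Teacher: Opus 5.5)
Your proof is correct, and its core step is the paper's. Appendix A proves only that one coupled step commutes with $g$, via $g\cdot\mathbf{R}_t+(s-t)\,\hat{u}(g\cdot\mathbf{G}_t)=\bigl(\mathbf{R}_t+(s-t)\,\hat{u}(\mathbf{G}_t)\bigr)\mathbf{Q}^\top+\mathbf{1}\mathbf{a}^\top=g\cdot\mathbf{R}_s$, together with invariance of the rate matrices for $(\mathbf{X}_s,\mathbf{E}_s)$. It reads ``the whole generation process is equivariant'' as this transition-operator equivariance, composed over the time grid.

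You instead target the EDM-style distributional statement $p_t(g\cdot\mathbf{G})=p_t(\mathbf{G})$. That needs two layers the paper does not have: an invariant prior as base case, and the change-of-variables induction that pushes invariance through the equivariant kernel. This buys the property that matters for a generative model, namely that the law of the generated molecules is invariant. Your kernel-level handling of the discrete draw is also cleaner than the paper's pathwise claim $(\mathbf{X}_s',\mathbf{E}_s')=(\mathbf{X}_s,\mathbf{E}_s)$, which holds only in law or under shared randomness.

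The price is translations. No distribution on $\mathbb{R}^{n\times 3}$ is translation invariant, so you must change the sampler: project $\mathbf{R}_0$ to zero center of mass and mean-subtract the velocity. Algorithm~\ref{alg:sampling} does not do this. The paper's weaker map-level statement needs neither an invariant prior nor this projection, because $\mathbf{a}$ cancels in the update. State the center-of-mass convention as an explicit additional assumption, not as a reading of (ii)--(iii).

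Two small fixes. First, kernel equivariance is not ``equivalent'' to marginal invariance: it is sufficient only together with an invariant initial law, and the converse fails. Second, $p_s$ need not have a density after the deterministic coordinate update, so phrase the induction as $g_\#p_s=p_s$ for measures; the argument goes through unchanged.
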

\noindent We can prove that the assumptions of the above proposition hold in our proposed EQUIMF (see~\hyperref[app:appendixA]{Appendix A}), which indicates EQUIMF keeps the equivariance inductive bias.

\section{Related Work}
\noindent\textbf{Graph Generative Models.}
Graph generative models aim to learn the distribution of complex graphs and enable sampling from this distribution, and have been widely applied to tasks such as molecular design~\cite{deepgraphsurvey2022}. From the perspective of generation paradigms, existing approaches can be broadly categorized into four classes: Autoregressive methods~\cite{molecularrnn2019,you2018graphrnn,graphaf2020} generate graphs incrementally by treating them as sequences, but often face challenges in modeling node order and permutation invariance; VAE-based method ~\cite{graphvae2018,constrainedgraphvae2018} reconstruct graph structures via latent variables, including both one-shot decoding and stepwise generation variants; GAN-based method ~\cite{de2018molgan} generate graphs or molecules through adversarial training; Normalizing flow methods ~\cite{graphnvp2019,graphnormflows2019} characterize graph distributions via invertible transformations. Diffusion model methods ~\cite{graphdiffusionsurvey2023,generativediffusiongraphs2023} that generate novel data samples from a given data distribution that employs two Markov chains.

\noindent\textbf{Discrete Diffusion and Flow Matching.} 
In recent years, the diffusion/flow paradigm has emerged as one of the mainstream approaches for graph generation. Early works often relax the adjacency matrix to a continuous space to reuse continuous diffusion frameworks~\cite{graphdiffusionsurvey2023,autoregressivediffusiongraph2024}. But this weakens the discrete structural properties of graphs and introduces inappropriate noise injection mechanisms. In contrast, discrete diffusion ~\cite{Vignac2022DiGress,Xu2024DiSco,Siraudin2024Cometh} directly defines transitions in the discrete state space, thus naturally preserving the discreteness of nodes/edges and demonstrating strong performance on various graph generation tasks. Closely related to discrete diffusion are discrete flow matching/discrete flow models ~\cite{gat2024discrete,Qin2024DeFoG,graphdf2021} based on Continuous-Time Markov Chains that their core is to characterize instantaneous transition rates using a rate matrix, with the time evolution of marginal distributions connected by the Kolmogorov equation.

\noindent\textbf{Geometric Graph Generation Models.}
Recent advances move toward joint generative modeling of molecular topology and geometry. A prominent line of work adopts continuous diffusion~\cite{graphdiffusionsurvey2023,hoogeboom2022equivariant, Chen2023EDGE} or score-based~\cite{scorebasedgraphsde2022} models or flow-based models~\cite {song2023equivariant} in Euclidean space, where atomic coordinates are gradually denoised from isotropic Gaussian noise. TTo respect physical symmetries, these models commonly incorporate equivariant neural architectures tailored to SE(3) rigid transformations .~\cite {garcia2021en,equiformer2022,equiformerv22024,se3equivariantattention2024}, such as equivariant message passing or tensor field networks, ensuring that predicted geometric updates transform consistently under rotations and translations. This design has led to substantial improvements in stability, sample validity, and physical plausibility for 3D molecule generation.

\section{Experiments}
\label{sec:experiments}

In this section, we justify the advantages of the proposed equivariant meanflow with comprehensive experiments. The experimental setup is introduced in Section \ref{sec:5.1}. Then we report and analyze the evaluation results for the 3d geometric graph generation in Section \ref{sec:5.2}. We provide the performance of controllable molecule generation that targets predefined desired properties in Section \ref{sec:5.3}. We provide detailed ablation studies in Sections \ref{sec:5.4} and \ref{sec:5.5} to further gain insight into the effects of different methods. At last, we demonstrate the high sampling efficiency in Section \ref{sec:5.6}. Other experimental details are in \hyperref[app:appendixC]{Appendix C}.

\subsection{Setup}\label{sec:5.1}
\noindent\textbf{Evaluation Tasks.}
In this study, following the prior work \cite{song2023equivariant, hoogeboom2022equivariant}, we evaluate EQUIMF on several tasks related to 3D molecular graph generation. Specifically, we assess the model's performance on the tasks of \textit{Molecular Modeling and Generation} and \textit{Conditional Molecule Generation}.

\noindent\textbf{Datasets.}
We use two commonly used datasets for our experiments.
The \textbf{QM9 dataset}~\cite{ramakrishnan2014quantum} is widely used for 3D molecular generation studies and includes 134k small organic molecules with information about various molecular properties. We use this dataset for both unconditional and conditional generation tasks. Specifically, for conditional tasks, we train models to predict chemical properties based on molecular graphs. 
We also evaluate EQUIMF on the \textbf{GEOM-DRUG dataset}~\cite{axelrod2022geom}, which is used for generating large molecular geometries. This dataset consists of large-scale molecular graphs with 3D atomic positions. It is a suitable testbed for our model’s capacity to generate molecules with realistic geometries.

\subsection{Molecular Graph Generation}\label{sec:5.2}

% \textbf{Datasets.} We choose QM9 dataset, which has been widely adopted in previous 3D molecule generation studies, for the setting of unconditional and conditional molecule generation. We also test the EquiMean on the GEOM-DRUG (Geometric Ensemble Of Molecules) dataset for generating large molecular geometries. The data configurations directly follow previous work.

\noindent\textbf{Evaluation Metrics.} Following \cite{song2023equivariant}, to assess the model’s effectiveness, we evaluate the chemical viability of the molecules it generates—this metric reflects the model’s ability to capture inherent chemical principles from the training data.
Subsequently, we gauge the quality of the predicted molecular graphs using two core stability metrics: atom stability and molecule stability. The atom stability metric calculates the fraction of atoms that exhibit the correct valence state, whereas the molecule stability metric measures the percentage of generated molecules where every atom meets stability requirements. In addition to these stability metrics, we further report validity and uniqueness: validity denotes the proportion of molecules deemed chemically valid by RDKit, and uniqueness is the percentage of distinct compounds among all generated samples.

\noindent\textbf{Baselines.}
Following the prior work~\cite{song2023equivariant,hoogeboom2022equivariant}, we compare our proposed method with existing methods on molecular generation using the methods of Equivariant models~\cite{gebauer2019symmetry} and Equivariant Normalizing Flows~\cite{garcia2021en}. Besides that, we also compared with the equivariant graph diffusion models~\cite{ho2020ddpm} and its non-equivariant variant and improved version. Finally, and most importantly, we compared it with the current SOTA method~\cite{song2023equivariant}, which is an equivariant flow matching method with Hybrid probability transport for 3d molecule generation.

\noindent\textbf{Results.} We quantitatively evaluate the performance of our method against state-of-the-art baselines on both QM9 and DRUG datasets, with results summarized in Table~\ref{tab:qm9_metrics}. Following~\cite{song2023equivariant}, we also get the above metrics from 10000 samples for each method. As shown in the table, on the QM9 dataset, our method demonstrates significant superiority across all key metrics. On the DRUG dataset, our method also delivers competitive performance. Overall, our method exhibits consistent and outstanding performance across both datasets. It not only guarantees the stability and validity of generated molecules but also enhances the diversity of outputs, thereby validating its effectiveness and competitiveness in 3D molecular generation tasks.

\begin{table}[h]
\centering
\small
\setlength{\tabcolsep}{4pt}
\caption{Mean Absolute Error for molecular property prediction.}
%\guoqiang{black the best results}
\label{tab:mae}
\begin{threeparttable}
\begin{tabular}{@{}lcccccc@{}}
\toprule
Property & $Q_{\text{min}}$ & $\Delta e$ & $\Delta v$ & $\epsilon_{\text{LUMO}}$ & $\mu$ & $C_v$ \\
         & Bohr$^2$ & eV & meV & eV & D & $\frac{\text{cal}}{\text{mol K}}$ \\
\midrule 
QM9*     & 0.10 & 64 & 39 & 36 & 0.043 & 0.00 \\
EDM$^\dagger$ & 2.76 & 655 & 536 & 584 & 1.11 & 1.101\\
EQUIFM$^\ddagger$ & 2.45 & 599 & 337 & 545 & 1.112 & 1.038 \\
EQUIMF (ours) & \textbf{2.37} & \textbf{594} & \textbf{322} & \textbf{494} & \textbf{1.080} & \textbf{1.011} \\
\bottomrule
\end{tabular}
\begin{tablenotes}[flushleft, small] % flushleft左对齐，small小字号更美观
\item $^\dagger$ \cite{hoogeboom2022equivariant}, $^\ddagger$ \cite{song2023equivariant}
% \item $^\ddagger$ \cite{song2023equivariant}
\end{tablenotes}
\end{threeparttable}
\end{table}

\noindent\textbf{Baselines.}
We select the existing method~\cite{song2023equivariant} and~\cite{hoogeboom2022equivariant} as our baselines.

\noindent\textbf{Results and Analyses.}
Table~\ref{tab:mae} reports the mean absolute error (MAE) between the target property values and the values predicted from generated molecules. Overall, our method achieves best performance across most properties, indicating improved controllability and reduced bias in conditional generation. 
This demonstrates our discrete meanflow and improving cross-domain conditioning can materially enhance the fidelity of property-controlled generation.

\subsection{Ablations on the Impacts of Equivariance}\label{sec:5.4}

To evaluate the effect of \emph{equivariant inductive biases} in our framework, we construct an ablation that differs only in whether the continuous geometric backbone and the shared backbone encoder keep equivariant coordinate updates. Specifically, we compare: \textbf{(i) EQUIMF:} our default model using an SE(3)-equivariant backbone (e.g., EGNN-style message passing) to predict the geometric MeanFlow field; \textbf{(ii) NormalMF:} a non-equivariant counterpart where the backbone is replaced by a standard graph MLP that takes the same inputs but does not guarantee equivariance (i.e., coordinates are updated without the SE(3)-equivariant constraint). We evaluate on QM9 using \emph{Atom Stable (\%)} and \emph{Mol Stable (\%)} following common practice. Table~\ref{tab:ablation_1} shows that equivariant inductive biases consistently improve both stability metrics. These results validate that incorporating SE(3)-equivariant inductive bias into the continuous MeanFlow backbone is a key factor for stable and physically consistent molecule generation.
\begin{table}[H]
  \caption{Ablation results of our models trained with/without equivariance on the QM9 dataset.}
  \label{tab:ablation_1}
  \begin{center}
    \begin{small}
      \begin{sc}
        \begin{tabular}{lcccr}
        \toprule
        Method & Atom Stable (\%) & Mol Stable (\%) \\
        \midrule
        $\text{EQUIMF}$ & $\textbf{98.9} \pm 0.1$ & $\textbf{93.0} \pm 0.2$ \\
        $\text{NormalMF}$ & $97.3 \pm 0.1$ & $88.1 \pm 0.1$ \\
        \bottomrule
        \end{tabular}
      \end{sc}
    \end{small}
  \end{center}
  \vskip -0.1in
\end{table}

\subsection{Ablations on the Impacts of Mutual Conditioning}\label{sec:5.5}
To validate the necessity of bidirectional mutual conditioning between discrete topology and continuous geometry, we consider four different approaches to model the distribution $p_{1|t} (\mathbf{G}_1 | \mathbf{G}_t)$: 
\begin{align*}
    & \text{P1}: \quad  p_{1|t}^{\theta} (\mathbf{G}_1 | \mathbf{G}_t) := p_{1|t}^{\theta_1, \theta_3} (\widetilde{\mathbf{G}}_1 | \mathbf{G}_t) p_{1|t}^{\theta_1, \theta_2} (\mathbf{R}_1 | \mathbf{G}_t), \\
    & \text{P2}: \quad  p_{1|t}^{\theta} (\mathbf{G}_1 | \mathbf{G}_t) := p_{1|t}^{\theta_3} (\widetilde{\mathbf{G}}_1 | \widetilde{\mathbf{G}}_t) p_{1|t}^{\theta_1, \theta_2} (\mathbf{R}_1 | \mathbf{G}_t), \\
    & \text{P3}: \quad  p_{1|t}^{\theta} (\mathbf{G}_1 | \mathbf{G}_t) := p_{1|t}^{\theta_1, \theta_3} (\widetilde{\mathbf{G}}_1 | \mathbf{G}_t) p_{1|t}^{\theta_2} (\mathbf{R}_1 | \mathbf{R}_t), \\
    & \text{P4}: \quad  p_{1|t}^{\theta} (\mathbf{G}_1 | \mathbf{G}_t) := p_{1|t}^{\theta_3} (\widetilde{\mathbf{G}}_1 | \widetilde{\mathbf{G}}_t) p_{1|t}^{\theta_2} (\mathbf{R}_1 | \mathbf{R}_t).
\end{align*}
As illustrated in Table~\ref{tab:ablation_2}, the results validate that mutual conditioning is critical for generating stable molecules: geometry-aware topological updates reduce chemically invalid edges, while structure-guided geometric updates prevent physically implausible conformations.
\begin{table}[H]
  \caption{Ablation results of meanflow models trained with/without mutual conditioning on the QM9 dataset.}
  \label{tab:ablation_2}
  \begin{center}
    \begin{small}
      \begin{sc}
        \begin{tabular}{lcccr}
        \toprule
        Method & Atom Stable (\%) & Mol Stable (\%) \\
        \midrule
        $\text{P1}$ (ours) & $\textbf{98.9} \pm 0.1$ & $\textbf{93.0} \pm 0.2$ \\
        $\text{P2}$ & $85.1 \pm 0.1$ & $79.4 \pm 0.1$ \\
        $\text{P3}$ & $83.6 \pm 0.1$ & $77.2 \pm 0.1$ \\
        $\text{P4}$ & $33.7 \pm 0.1$ & $28.5 \pm 0.1$ \\
        \bottomrule
        \end{tabular}
      \end{sc}
    \end{small}
  \end{center}
  \vskip -0.1in
\end{table}

\subsection{Sampling Efficiency}\label{sec:5.6}
% \guoqiang{need to revise, figure, expression, 2 $\times$ accelerating?}
We evaluate the convergence efficiency and generation quality of our method by tracking the molecular stability during the sampling process, with EquiFM as the baseline. The variation curve of stability with sampling steps is shown in Figure \ref{fig:Sampling}. At the 0.95 stability threshold, our method requires only half the number of steps of the baseline, representing a nearly 2$\times$ improvement in efficiency. Besides, we get higher final stability.
\begin{figure}[htbp]\label{fig:Sampling}
    \centering
    \includegraphics[width=0.5\textwidth]{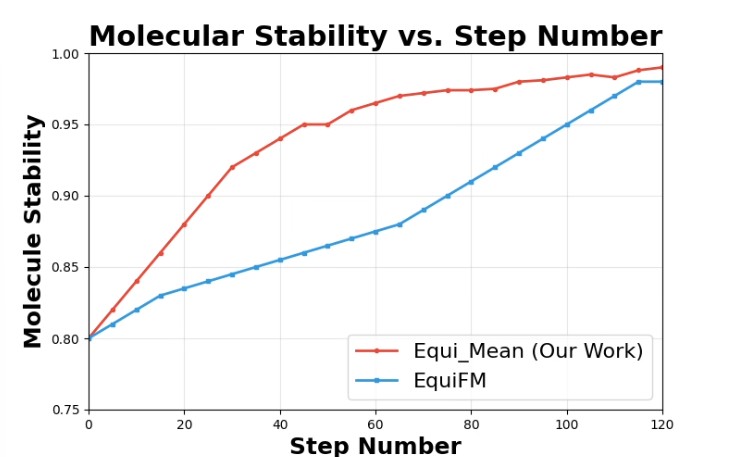} 
    \caption{Molecular Stability vs. Step Number}
    \label{fig:Sampling}
\end{figure}

\section{Conclusion and Discussion}
% We propose EQUIMF, a unified SE(3)-equivariant generative framework that jointly models discrete and continuous domains through synchronized MeanFlow dynamics. 
% We couple the modeling of discrete structural components and continuous geometric properties by leveraging synchronized MeanFlow dynamics, and we ground theoretically to ensure equivariant graph distribution modeling. 
% Our results on a series of benchmarks show that Equivariant MeanFlow consistently improves generation quality, physical validity, and sampling efficiency over prior flow-matching and diffusion models. 
We present EQUIMF, a unified SE(3)-equivariant generative framework that models discrete and continuous domains jointly via synchronized MeanFlow dynamics. 
By coupling discrete structural and continuous geometric modeling and establishing theoretical guarantees for equivariant graph distribution learning, EQUIMF outperforms existing flow-matching and diffusion models across benchmarks in generation quality, physical validity, and sampling efficiency.
Besides, our proposed discrete MeanFlow can be used in other discrete domains, e.g., text generation. EQUIMF achieves highly efficient few-step sampling but does not fully utilize the core MeanFlow merit of single-step sampling. 
Its current design balances efficiency and quality through a few-step evolution, leaving one-step discrete-continuous generation unexplored.
% \guoqiang{new discrete meanflow}

\section*{Impact Statement}
``This paper presents work whose goal is to advance the field of Machine
Learning. There are many potential societal consequences of our work, none
which we feel must be specifically highlighted here.''

%% The file named.bst is a bibliography style file for BibTeX 0.99c
\bibliography{refs}

\begin{thebibliography}{10}

\bibitem{deepgraphsurvey2022}
Yanqiao Zhu, Yuanqi Du, Yinkai Wang, Yichen Xu, Jieyu Zhang, Qiang Liu, and Shu Wu.
\newblock A survey on deep graph generation: Methods and applications.
\newblock In {\em First Learning on Graphs Conference (LoG 2022)}, 2022.
\newblock Accepted by LoG 2022.

\bibitem{schutt2017quantum}
Kristof~T. Sch{\"u}tt, Farhad Arbabzadah, Stefan Chmiela, Klaus~R. M{\"u}ller, and Alexandre Tkatchenko.
\newblock Quantum-chemical insights from deep tensor neural networks.
\newblock {\em Nature Communications}, 8:13890, 2017.

\bibitem{autoregressivediffusiongraph2024}
Lingkai Kong, Jiaming Cui, Haotian Sun, Yuchen Zhuang, B.~Aditya Prakash, and Chao Zhang.
\newblock Autoregressive diffusion model for graph generation.
\newblock {\em arXiv preprint arXiv:2307.08849}, 2024.

\bibitem{ma2024exploring}
N.~Ma, M.~Goldstein, M.~S. Albergo, N.~M. Boffi, E.~Vanden-Eijnden, and S.~Xie.
\newblock Exploring flow and diffusion-based generative models with scalable interpolant transformers.
\newblock In {\em European Conference on Computer Vision (ECCV)}, 2024.

\bibitem{molecularrnn2019}
Mariya Popova, Mykhailo Shvets, Junier Oliva, and Olexandr Isayev.
\newblock Molecularrnn: Generating realistic molecular graphs with optimized properties.
\newblock {\em arXiv preprint arXiv:1905.13372}, 2019.

\bibitem{Vignac2022DiGress}
Cl{\'e}ment Vignac, Ireneusz Krawczuk, Alexandre Siraudin, Bohan Wang, Volkan Cevher, and Pascal Frossard.
\newblock Digress: Discrete denoising diffusion for graph generation.
\newblock In {\em International Conference on Machine Learning (ICML)}, 2022.

\bibitem{Xu2024DiSco}
Z.~Xu, R.~Qiu, Y.~Chen, H.~Chen, X.~Fan, M.~Pan, Z.~Zeng, M.~Das, and H.~Tong.
\newblock Discrete-state continuous-time diffusion for graph generation.
\newblock In {\em Advances in Neural Information Processing Systems (NeurIPS)}, 2024.

\bibitem{Siraudin2024Cometh}
Alexandre Siraudin, Fragkiskos~D. Malliaros, and Christopher Morris.
\newblock Cometh: A continuous-time discrete-state graph diffusion model.
\newblock {\em arXiv preprint arXiv:2406.06449}, 2024.

\bibitem{campbell2024generative}
A.~Campbell, J.~Yim, R.~Barzilay, T.~Rainforth, and T.~Jaakkola.
\newblock Generative flows on discrete state-spaces: Enabling multimodal flows with applications to protein co-design.
\newblock In {\em International Conference on Machine Learning (ICML)}, 2024.

\bibitem{Qin2024DeFoG}
Yiming Qin, Manuel Madeira, Dorina Thanou, and Pascal Frossard.
\newblock Defog: Discrete flow matching for graph generation.
\newblock In {\em Proceedings of the 42nd International Conference on Machine Learning (ICML)}, 2025.

\bibitem{song2023equivariant}
Yuxuan Song, Jingjing Gong, Minkai Xu, Ziyao Cao, Yanyan Lan, Stefano Ermon, Hao Zhou, and Wei-Ying Ma.
\newblock Equivariant flow matching with hybrid probability transport.
\newblock {\em arXiv preprint arXiv:2312.07168}, 2023.
\newblock NeurIPS 2023.

\bibitem{hoogeboom2022equivariant}
Emiel Hoogeboom, Victor Garcia~Satorras, Cl{\'e}ment Vignac, and Max Welling.
\newblock Equivariant diffusion for molecule generation in 3d.
\newblock {\em arXiv preprint arXiv:2203.17003}, 2022.
\newblock Accepted at International Conference on Machine Learning (ICML 2022).

\bibitem{liu2023flow}
Xingchao Liu, Chengyue Gong, and Qiang Liu.
\newblock Flow straight and fast: Learning to generate and transfer data with rectified flow.
\newblock In {\em International Conference on Learning Representations (ICLR)}, 2023.

\bibitem{lipman2022flow}
Yaron Lipman, Ricky T.~Q. Chen, Heli Ben-Hamu, Maximilian Nickel, and Minh Le.
\newblock Flow matching for generative modeling.
\newblock {\em arXiv preprint arXiv:2210.02747}, 2022.

\bibitem{albergo2023stochastic}
Michael~Samuel Albergo and Eric Vanden-Eijnden.
\newblock Building normalizing flows with stochastic interpolants.
\newblock In {\em International Conference on Learning Representations (ICLR)}, 2023.

\bibitem{garcia2021en}
Victor Garcia~Satorras, Emiel Hoogeboom, and Max Welling.
\newblock E(n) equivariant graph neural networks.
\newblock {\em arXiv preprint arXiv:2102.09844}, 2021.

\bibitem{geng2025mean}
Zhengyang Geng, Mingyang Deng, Xingjian Bai, J.~Zico Kolter, and Kaiming He.
\newblock Mean flows for one-step generative modeling, 2025.
\newblock Tech report.

\bibitem{satorras2021en}
Victor Garcia~Satorras, Emiel Hoogeboom, Fabian~B. Fuchs, Ingmar Posner, and Max Welling.
\newblock E(n) equivariant normalizing flows.
\newblock {\em Advances in Neural Information Processing Systems}, 34:20183--20195, 2021.
\newblock Accepted at NeurIPS 2021.

\bibitem{gebauer2019symmetry}
Niklas W.~A. Gebauer, Michael Gastegger, and Kristof~T. Schütt.
\newblock Symmetry-adapted generation of 3d point sets for the targeted discovery of molecules, 2019.

\bibitem{wu2022diffusion}
Lemeng Wu, Chengyue Gong, Xingchao Liu, Mao Ye, and Qiang Liu.
\newblock Diffusion-based molecule generation with informative prior bridges.
\newblock {\em arXiv preprint arXiv:2209.00865}, 2022.

\bibitem{you2018graphrnn}
Jiaxuan You, Rex Ying, Xiang Ren, William~L. Hamilton, and Jure Leskovec.
\newblock {GraphRNN}: Generating realistic graphs with deep autoregressive models.
\newblock In {\em International Conference on Machine Learning (ICML)}, 2018.

\bibitem{graphaf2020}
Chence Shi, Minkai Xu, Zhaocheng Zhu, Weinan Zhang, Ming Zhang, and Jian Tang.
\newblock Graphaf: a flow-based autoregressive model for molecular graph generation.
\newblock In {\em International Conference on Learning Representations (ICLR 2020)}, 2020.

\bibitem{graphvae2018}
Martin Simonovsky and Nikos Komodakis.
\newblock Graphvae: Towards generation of small graphs using variational autoencoders.
\newblock {\em arXiv preprint arXiv:1802.03480}, 2018.

\bibitem{constrainedgraphvae2018}
Qi~Liu, Miltiadis Allamanis, Marc Brockschmidt, and Alexander~L. Gaunt.
\newblock Constrained graph variational autoencoders for molecule design.
\newblock {\em arXiv preprint arXiv:1805.09076}, 2018.

\bibitem{de2018molgan}
Nicola De~Cao and Thomas Kipf.
\newblock An implicit generative model for small molecular graphs.
\newblock In {\em International Conference on Machine Learning (ICML) Workshops}, 2018.

\bibitem{graphnvp2019}
Kaushalya Madhawa, Katushiko Ishiguro, Kosuke Nakago, and Motoki Abe.
\newblock Graphnvp: An invertible flow model for generating molecular graphs.
\newblock {\em arXiv preprint arXiv:1905.11600}, 2019.

\bibitem{graphnormflows2019}
Jenny Liu, Aviral Kumar, Jimmy Ba, Jamie Kiros, and Kevin Swersky.
\newblock Graph normalizing flows.
\newblock {\em arXiv preprint arXiv:1905.13177}, 2019.

\bibitem{graphdiffusionsurvey2023}
Mengchun Zhang, Maryam Qamar, Taegoo Kang, Yuna Jung, Chenshuang Zhang, Sung-Ho Bae, and Chaoning Zhang.
\newblock A survey on graph diffusion models: Generative ai in science for molecule, protein and material.
\newblock {\em arXiv preprint arXiv:2304.01565}, 2023.

\bibitem{generativediffusiongraphs2023}
Chengyi Liu, Wenqi Fan, Yunqing Liu, Jiatong Li, Hang Li, Hui Liu, Jiliang Tang, and Qing Li.
\newblock Generative diffusion models on graphs: Methods and applications.
\newblock In {\em Proceedings of the 32nd International Joint Conference on Artificial Intelligence (IJCAI 2023)}, 2023.
\newblock Accepted by IJCAI 2023.

\bibitem{gat2024discrete}
I.~Gat, T.~Remez, N.~Shaul, F.~Kreuk, R.~T. Chen, G.~Synnayeve, Y.~Adi, and Y.~Lipman.
\newblock Discrete flow matching.
\newblock In {\em Advances in Neural Information Processing Systems (NeurIPS)}, 2024.

\bibitem{graphdf2021}
Youzhi Luo, Keqiang Yan, and Shuiwang Ji.
\newblock Graphdf: A discrete flow model for molecular graph generation.
\newblock In {\em The 38th International Conference on Machine Learning (ICML 2021)}, 2021.
\newblock Accepted by ICML 2021.

\bibitem{Chen2023EDGE}
X.~Chen, J.~He, X.~Han, and L.-P. Liu.
\newblock Efficient and degree-guided graph generation via discrete diffusion modeling.
\newblock In {\em International Conference on Machine Learning (ICML)}, 2023.

\bibitem{scorebasedgraphsde2022}
Jaehyeong Jo, Seul Lee, and Sung~Ju Hwang.
\newblock Score-based generative modeling of graphs via the system of stochastic differential equations.
\newblock In {\em The 39th International Conference on Machine Learning (ICML 2022)}, 2022.

\bibitem{equiformer2022}
Yi-Lun Liao and Tess Smidt.
\newblock Equiformer: Equivariant graph attention transformer for 3d atomistic graphs.
\newblock {\em arXiv preprint arXiv:2206.11990}, 2022.

\bibitem{equiformerv22024}
Yi-Lun Liao, Brandon Wood, Abhishek Das, and Tess Smidt.
\newblock Equiformerv2: Improved equivariant transformer for scaling to higher-degree representations.
\newblock In {\em International Conference on Learning Representations (ICLR 2024)}, 2024.

\bibitem{se3equivariantattention2024}
Evangelos Chatzipantazis, Stefanos Pertigkiozoglou, Edgar Dobriban, and Kostas Daniilidis.
\newblock Se(3)-equivariant attention networks for shape reconstruction in function space.
\newblock {\em arXiv preprint arXiv:2204.02394}, 2024.

\bibitem{ramakrishnan2014quantum}
R.~Ramakrishnan, P.O. Dral, M.~Rupp, and O.A. Von~Lilienfeld.
\newblock Quantum chemistry structures and properties of 134 kilo molecules.
\newblock {\em Scientific Data}, 1:140022, 2014.

\bibitem{axelrod2022geom}
Simon Axelrod and Rafael Gómez-Bombarelli.
\newblock {GEOM, energy-annotated molecular conformations for property prediction and molecular generation}.
\newblock {\em Scientific Data}, 2022.

\bibitem{ho2020ddpm}
Jonathan Ho, Ajay Jain, and Pieter Abbeel.
\newblock Denoising diffusion probabilistic models.
\newblock In {\em Advances in Neural Information Processing Systems (NeurIPS)}, 2020.

\bibitem{liao2019gran}
R.~Liao, Y.~Li, Y.~Song, S.~Wang, W.~Hamilton, D.~K. Duvenaud, R.~Urtasun, and R.~Zemel.
\newblock Efficient graph generation with graph recurrent attention networks.
\newblock In {\em Advances in Neural Information Processing Systems (NeurIPS)}, 2019.

\bibitem{Martinkus2022SPECTRE}
Karolis Martinkus, Andreas Loukas, Nicolas Perraudin, and Roger Wattenhofer.
\newblock {SPECTRE}: Spectral conditioning helps to overcome the expressivity limits of one-shot graph generators.
\newblock In {\em International Conference on Machine Learning (ICML)}, 2022.

\bibitem{Diamant2023Bandwidth}
N.~L. Diamant, A.~M. Tseng, K.~V. Chuang, T.~Biancalani, and G.~Scalia.
\newblock Improving graph generation by restricting graph bandwidth.
\newblock In {\em International Conference on Machine Learning (ICML)}, 2023.

\bibitem{Dai2020ScalableSparseGraphs}
H.~Dai, A.~Nazi, Y.~Li, B.~Dai, and D.~Schuurmans.
\newblock Scalable deep generative modeling for sparse graphs.
\newblock In {\em International Conference on Machine Learning (ICML)}, 2020.

\bibitem{Goyal2020GraphGen}
Nikhil Goyal, Harshit~V. Jain, and Sayan Ranu.
\newblock Graphgen: A scalable approach to domain-agnostic labeled graph generation.
\newblock In {\em Proceedings of The Web Conference (WWW)}, 2020.

\bibitem{Bergmeister2023ILE}
Andreas Bergmeister, Karolis Martinkus, Nicolas Perraudin, and Roger Wattenhofer.
\newblock Efficient and scalable graph generation through iterative local expansion.
\newblock In {\em International Conference on Learning Representations (ICLR)}, 2023.

\bibitem{Jo2024DiffusionMixture}
J.~Jo, D.~Kim, and S.~J. Hwang.
\newblock Graph generation with diffusion mixture.
\newblock In {\em International Conference on Machine Learning (ICML)}, 2024.

\bibitem{Eijkelboom2024VFM}
Floris Eijkelboom, Gregory Bartosh, Christian Andersson~Naesseth, Max Welling, and Jan-Willem van~de Meent.
\newblock Variational flow matching for graph generation.
\newblock In {\em Advances in Neural Information Processing Systems (NeurIPS)}, 2024.

\bibitem{spectre2022}
Karolis Martinkus, Andreas Loukas, Nathana\"el Perraudin, and Roger Wattenhofer.
\newblock Spectre: Spectral conditioning helps to overcome the expressivity limits of one-shot graph generators.
\newblock In {\em The 39th International Conference on Machine Learning (ICML 2022)}, page 21 pages, 2022.
\newblock 10 figures.

\bibitem{iterative2024}
Andreas Bergmeister, Karolis Martinkus, Nathana\"el Perraudin, and Roger Wattenhofer.
\newblock Efficient and scalable graph generation through iterative local expansion.
\newblock In {\em International Conference on Learning Representations (ICLR 2024)}, 2024.
\newblock Published as a conference paper.

\end{thebibliography}
\bibliographystyle{unsrt}

%%%%%%%%%%%%%%%%%%%%%%%%%%%%%%%%%%%%%%%%%%%%%%%%%%%%%%%%%%%%%%%%%%%%%%%%%%%%%%%
%%%%%%%%%%%%%%%%%%%%%%%%%%%%%%%%%%%%%%%%%%%%%%%%%%%%%%%%%%%%%%%%%%%%%%%%%%%%%%%
% APPENDIX
%%%%%%%%%%%%%%%%%%%%%%%%%%%%%%%%%%%%%%%%%%%%%%%%%%%%%%%%%%%%%%%%%%%%%%%%%%%%%%%
%%%%%%%%%%%%%%%%%%%%%%%%%%%%%%%%%%%%%%%%%%%%%%%%%%%%%%%%%%%%%%%%%%%%%%%%%%%%%%%
\newpage
\appendix
\onecolumn

\section{Formal Proof of Propositions}\label{app:appendixA}
% \guoqiang{notations are not consistent with the above paper}

% \begin{align*}
%     p_{1|t}^{\theta} (\mathbf{G}_1 | \mathbf{G}_t) = p_{1|t}^{\theta_1, \theta_3} (\widetilde{\mathbf{G}}_1 | \mathbf{G}_t) p_{1|t}^{\theta_1, \theta_2} (\mathbf{R}_1 | \mathbf{G}_t) \\
%     p_{1|t}^{\theta} (\mathbf{G}_1 | \mathbf{G}_t) = p_{1|t}^{\theta_3} (\widetilde{\mathbf{G}}_1 | \widetilde{\mathbf{G}}_t) p_{1|t}^{\theta_1, \theta_2} (\mathbf{R}_1 | \mathbf{G}_t) \\
%     p_{1|t}^{\theta} (\mathbf{G}_1 | \mathbf{G}_t) = p_{1|t}^{\theta_1, \theta_3} (\widetilde{\mathbf{G}}_1 | \mathbf{G}_t) p_{1|t}^{\theta_2} (\mathbf{R}_1 | \mathbf{R}_t) \\
%     p_{1|t}^{\theta} (\mathbf{G}_1 | \mathbf{G}_t) = p_{1|t}^{\theta_3} (\widetilde{\mathbf{G}}_1 | \widetilde{\mathbf{G}}_t) p_{1|t}^{\theta_2} (\mathbf{R}_1 | \mathbf{R}_t)
% \end{align*}

Let $g=(\mathbf{Q},\mathbf{a})\in \textbf{SE}(3)$ where $\mathbf{Q}\in \textbf{O}(3)$ and $\mathbf{a}\in\mathbb{R}^3$.
For atomic coordinates $\mathbf{R}=[\mathrm{r}_1,\dots,\mathrm{r}_N]^\top\in\mathbb{R}^{N\times 3}$,
we define the rigid action
\begin{equation}
g\cdot \mathbf{r}_i = \mathbf{Q}\mathbf{r}_i+\mathbf{a},
\qquad
g\cdot \mathbf{R} = \mathbf{R}\mathbf{Q}^\top + \mathbf{1}\mathbf{a}^\top.
\label{eq:se3_action_matrix}
\end{equation}
For discrete node/edge states $(\mathbf{X},\mathbf{E})$, we treat them as $\textbf{SE}(3)$-invariant:
\begin{equation}
g\cdot (\mathbf{X},\mathbf{E},\mathbf{R}) \triangleq (\mathbf{X},\mathbf{E},\,g\cdot \mathbf{R}).
\label{eq:se3_action_graph}
\end{equation}

For any $i\neq j$ and any $g=(\mathbf{Q},\mathbf{a})\in\textbf{SE}(3)$,
\begin{equation}
\|(g\cdot \mathbf{r}_i)-(g\cdot \mathbf{r}_j)\|_2^2
=
\|\mathbf{Q}(\mathbf{r}_i-\mathbf{r}_j)\|_2^2
=
(\mathbf{r}_i-\mathbf{r}_j)^\top \mathbf{Q}^\top \mathbf{Q} (\mathbf{r}_i-\mathbf{r}_j)
=
\|\mathbf{r}_i-\mathbf{r}_j\|_2^2.
\label{eq:distance_invariant}
\end{equation}
Hence all pairwise squared distances are $\textbf{SE}(3)$-invariant.

\subsection{Invariant Discrete Nodes and edges features}

\begin{proposition}[Invariant discrete node and edge features]
\label{prop:inv_discrete}
Let $\mathbf{G}_t=(\mathbf{X}_t,\mathbf{E}_t,\mathbf{R}_t)$ be a noisy molecular graph at time $t$.
Assume node states $\mathbf{X}_t$ and edge states $\mathbf{E}_t$ represent discrete types (e.g., atom/bond categories). Then for any $g\in\textbf{SE}(3)$,
\begin{equation}
(\mathbf{X}_t',\mathbf{E}_t') = (\mathbf{X}_t,\mathbf{E}_t)
\quad\text{where}\quad
\mathbf{G}_t' = g\cdot \mathbf{G}_t = (\mathbf{X}_t',\mathbf{E}_t',g\cdot \mathbf{R}_t).
\end{equation}
That is, discrete node/edge features are $\mathbf{SE}(3)$-invariant.
\end{proposition}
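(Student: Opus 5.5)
The argument is essentially definitional: the claim follows from the way the group action on graphs is defined in Eq.~\eqref{eq:se3_action_graph}. I will fix an arbitrary $g=(\mathbf{Q},\mathbf{a})\in\mathbf{SE}(3)$ and a noisy graph $\mathbf{G}_t=(\mathbf{X}_t,\mathbf{E}_t,\mathbf{R}_t)$. I then unpack $g\cdot\mathbf{G}_t$ using Eq.~\eqref{eq:se3_action_graph}, which gives $g\cdot\mathbf{G}_t=(\mathbf{X}_t,\mathbf{E}_t,g\cdot\mathbf{R}_t)$. Comparing components with $\mathbf{G}_t'=(\mathbf{X}_t',\mathbf{E}_t',g\cdot\mathbf{R}_t)$ yields $\mathbf{X}_t'=\mathbf{X}_t$ and $\mathbf{E}_t'=\mathbf{E}_t$ directly. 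This holds for every $t\in[0,1]$ and every $g$, which is the claimed invariance.

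To make the statement less tautological, I will also justify why Eq.~\eqref{eq:se3_action_graph} is the only natural action. The states $x_t^{(i)}\in[b]$ and $e_t^{(ij)}\in[a+1]$ are categorical labels. They carry no coordinate representation, so the only linear representation of $\mathbf{SE}(3)$ on their one-hot encodings that is compatible with the model is the trivial one, i.e.\ $\mathbf{S}_g=\mathrm{id}$ in the notation of the Equivariance subsection.

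I will further check that this invariance is preserved along the noising trajectory, so that it applies to every $\mathbf{G}_t$ and not only to data points. The kernel $p_{t|1}(x_t^{(i)}\mid x_1^{(i)})=t\,\delta(x_t^{(i)},x_1^{(i)})+(1-t)p_0(x_t^{(i)})$, and its edge analogue, depend only on the categorical states and on $t$, not on $\mathbf{R}$. Hence rotating or translating $\mathbf{R}_1$ leaves the law of $(\mathbf{X}_t,\mathbf{E}_t)$ unchanged. In contrast, the coordinate noising $\mathbf{r}_t^{(i)}=t\mathbf{r}_1^{(i)}+(1-t)\epsilon$ involves only $\mathbf{R}$. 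The factorization $p_{t|1}(\mathbf{G}_t\mid\mathbf{G}_1)=p_{t|1}(\widetilde{\mathbf{G}}_t\mid\widetilde{\mathbf{G}}_1)\,p_{t|1}(\mathbf{R}_t\mid\mathbf{R}_1)$ then shows that $g$ acts on the second factor alone.

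No step here is genuinely difficult. The only point needing care is the conceptual one: being explicit that the invariance is a modelling choice encoded in Eq.~\eqref{eq:se3_action_graph}, and that it is consistent with the noising process. I would also note that geometry-derived features such as the pairwise distances in Eq.~\eqref{eq:distance_invariant} are likewise invariant. As a result, any edge features the encoder computes from $\mathbf{R}_t$ also remain invariant, which prepares the ground for verifying assumption (i) of Proposition~\ref{Prop:4.4}.
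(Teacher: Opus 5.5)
Your proof is correct and follows the paper's route. Both arguments reduce the claim to the fact that the action of $g$ on $\mathbf{G}_t$, as fixed in Eq.~\eqref{eq:se3_action_graph}, touches only $\mathbf{R}_t$, so $\mathbf{X}_t'=\mathbf{X}_t$ and $\mathbf{E}_t'=\mathbf{E}_t$ by definition; the paper phrases this as the discrete types being intrinsic chemical properties independent of the coordinate frame. Your extra checks on the noising kernel are harmless but not needed. Your claim that the trivial action is the \emph{only} compatible one is a modelling justification rather than a theorem: the paper's group also allows orthogonal $\mathbf{Q}$ with $\det\mathbf{Q}=-1$, under which chirality labels, for example, could legitimately be swapped.
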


\begin{proof}
By definition, discrete node features \(\mathbf{X}_t\) and edge features \(\mathbf{E}_t\) encode the \(\textit{intrinsic chemical properties}\). These properties are independent of the global coordinate frame of \(\mathbf{R}_t\), as they do not depend on the spatial position or orientation of the molecule. 
For any \(\textbf{SE}(3)\) transformation \(g = (\mathbf{Q}, \mathbf{a}) \in \textbf{SE}(3)\) (consisting of a rotation \(\mathbf{Q} \in \textbf{SO}(3)\) and a translation \(\mathbf{a} \in \mathbb{R}^3\)), the action of \(g\) only affects the 3D atomic coordinates \(\mathbf{R}_t\) (transforming them to \(g \cdot \mathbf{R}_t = \mathbf{Q} \mathbf{R}_t + \mathbf{a}\)). Since \(\mathbf{X}_t\) and \(\mathbf{E}_t\) are independent of \(\mathbf{R}_t\), applying \(g\) does not alter the discrete types of nodes or edges. Thus:
\[
\mathbf{X}_t' = \mathbf{X}_t,\quad \mathbf{E}_t' = \mathbf{E}_t
\]
This implies \((\mathbf{X}_t,\mathbf{E}_t) = (\mathbf{X}_t', \mathbf{E}_t')\), confirming that discrete node and edge features are \(\textbf{SE}(3)\)-invariant.

\end{proof}

\subsection{Equivariance of Average Velocity}
\begin{proposition}[Equivariance of average velocity]
\label{prop:avg_vel_equiv}
Let $t<s$ and define the (ground-truth) average velocity field
\begin{equation}
{u}_{t\to s}(\mathbf{R}_t)
\triangleq
\frac{\mathbf{R}_s-\mathbf{R}_t}{s-t}\in\mathbb{R}^{N\times 3}.
\label{eq:avg_vel_def}
\end{equation}
Then for any $g=(\mathbf{Q},\mathbf{a})\in\textbf{SE}(3)$,
\begin{equation}
{u}_{t\to s}(g\cdot \mathbf{R}_t)={u}_{t\to s}(\mathbf{R}_t)\,\mathrm{Q}^\top,
\label{eq:avg_vel_equivariance}
\end{equation}
i.e., the average velocity is $\mathbf{SE}(3)$-equivariant.
Moreover, if the continuous MeanFlow head is implemented by an
$\textbf{SE}(3)$-equivariant network (e.g., EGNN) producing
${u}_{t\to s} = \phi_{\theta_2}(\cdot)$, then
\begin{equation}
\hat u_{t\to s}(g\cdot \mathbf{G}_t)=\hat u_{t\to s}(\mathbf{G}_t)\,\mathbf{Q}^\top .
\label{eq:learned_avg_vel_equiv}
\end{equation}
\end{proposition}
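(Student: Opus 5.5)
The first claim follows by direct substitution into the definition \eqref{eq:avg_vel_def}, using the rigid action \eqref{eq:se3_action_matrix}. One point must be made explicit: ``evaluating $u_{t\to s}$ at $g\cdot\mathbf{R}_t$'' means transporting the whole trajectory by $g$. The noising trajectory of Section~\ref{sec:4.1} is built from a data configuration $\mathbf{R}_1$ and noise $\epsilon$. The transformed trajectory is therefore the one obtained from $g\cdot\mathbf{R}_1$, with the noise replaced by $\mathbf{Q}\epsilon$ (plus the mean-shift convention). Since $\mathcal{N}(\mathbf{0},I)$ is $\mathbf{O}(3)$-invariant, this does not change the noise law, and every state along the path transforms rigidly: $\mathbf{R}_\beta\mapsto g\cdot\mathbf{R}_\beta$ for all $\beta\in[t,s]$. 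For the translation to carry through the affine interpolation $t\mathbf{r}_1+(1-t)\epsilon$, I would work, as EGNN-based models do, in the zero-center-of-mass subspace. There the translational part is quotiented out, so only $\mathbf{Q}$ acts. I would state this as a standing convention at the start.

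With that in hand, the key computation is
\[
u_{t\to s}(g\cdot\mathbf{R}_t)=\frac{(\mathbf{R}_s\mathbf{Q}^\top+\mathbf{1}\mathbf{a}^\top)-(\mathbf{R}_t\mathbf{Q}^\top+\mathbf{1}\mathbf{a}^\top)}{s-t}=\frac{\mathbf{R}_s-\mathbf{R}_t}{s-t}\,\mathbf{Q}^\top .
\]
The translation $\mathbf{1}\mathbf{a}^\top$ cancels in the difference. This matches the statement: velocities are displacements, so they transform only by the linear part of $g$. I would add a remark reconciling this with the integral definition $\frac{1}{s-t}\int_t^s u_\beta\,\mathrm{d}\beta$ of Section~\ref{sec:4.3}. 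By the fundamental theorem of calculus, that integral equals $(\mathbf{R}_s-\mathbf{R}_t)/(s-t)$, so the same identity follows if each instantaneous velocity $u_\beta$ transforms as $u_\beta\mapsto u_\beta\mathbf{Q}^\top$, since right multiplication by $\mathbf{Q}^\top$ commutes with the integral.

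For the learned head, I would invoke the EGNN equivariance property of \cite{garcia2021en} together with the preceding proposition (Proposition~\ref{prop:inv_discrete}), which says $(\mathbf{X}_t,\mathbf{E}_t)$ are unchanged by $g$. In EGNN, messages and node features depend only on invariant inputs: the discrete types, the time and $\Delta$ embeddings, and the pairwise distances, which are invariant by \eqref{eq:distance_invariant}. So every scalar channel is invariant. The coordinate output is a sum $\sum_j(\mathbf{r}_i-\mathbf{r}_j)\,\psi(\text{invariants})$, and relative vectors map to $\mathbf{Q}(\mathbf{r}_i-\mathbf{r}_j)$. An induction over layers then gives $\hat u_{t\to s}(g\cdot\mathbf{G}_t)=\hat u_{t\to s}(\mathbf{G}_t)\mathbf{Q}^\top$, provided the head reads the velocity off as such a relative-vector sum (or as $\mathbf{R}^{L}-\mathbf{R}^{0}$ of the EGNN). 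In that form the translation cancels exactly as in the first part.

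The main obstacle is the interpretive step in the first paragraph. As literally written, $u_{t\to s}(\mathbf{R}_t)$ depends on the unobserved $\mathbf{R}_s$, so the claim only makes sense once one fixes that the whole coupled pair $(\mathbf{R}_t,\mathbf{R}_s)$ is transformed jointly, equivalently that the conditional law of $\mathbf{R}_s$ given $\mathbf{R}_t$ is $\mathbf{SE}(3)$-equivariant. I would justify this through the invariance of the Gaussian prior and the pushforward of the data distribution, in the center-of-mass-free frame.
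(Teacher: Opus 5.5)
Your proposal is correct and follows the paper's proof. The core step is exactly the paper's computation $g\cdot\mathbf{R}_s-g\cdot\mathbf{R}_t=(\mathbf{R}_s-\mathbf{R}_t)\mathbf{Q}^\top$ followed by division by $s-t$, and the learned-head claim is deduced, as in the paper, from EGNN equivariance; you spell this out (layer-wise induction on invariant scalars, plus a relative-vector readout so that $\mathbf{1}\mathbf{a}^\top$ cancels) where the paper only asserts it. Your joint transport of $(\mathbf{R}_t,\mathbf{R}_s)$ is what the paper uses implicitly, but it is not literally equivalent to equivariance of the conditional law of $\mathbf{R}_s$ given $\mathbf{R}_t$: that stronger reading also needs an $\mathbf{O}(3)$-invariant data distribution, which invariance of the Gaussian prior alone does not supply.
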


\begin{proof}
Using Eq.~\eqref{eq:se3_action_matrix}, we have
\[
g\cdot \mathbf{R}_s - g\cdot \mathbf{R}_t
=
(\mathbf{R}_s\mathbf{Q}^\top+\mathbf{1}\mathbf{a}^\top) - (\mathbf{R}_t\mathbf{Q}^\top+\mathbf{1}\mathbf{a}^\top)
=
(\mathbf{R}_s-\mathbf{R}_t)\mathbf{Q}^\top.
\]
Dividing by $(s-t)$ yields
\[
{u}_{t\to s}(g\cdot \mathbf{R}_t)
=
\frac{g\cdot \mathbf{R}_s - g\cdot \mathbf{R}_t}{s-t}
=
\frac{(\mathbf{R}_s-\mathrm{R}_t)\mathbf{Q}^\top}{s-t}
=
{u}_{t\to s}(\mathbf{R}_t)\,\mathbf{Q}^\top,
\]
which proves Eq.~\eqref{eq:avg_vel_equivariance}.
The learned equivariance Eq.~\eqref{eq:learned_avg_vel_equiv} follows directly
from the assumed $\mathrm{SE}(3)$-equivariance of the backbone and the head
(e.g., EGNN layers preserve the rigid-motion action while operating only on
invariants such as Eq.~\eqref{eq:distance_invariant}).
\end{proof}

\subsection{Invariance of rate matrices}
\begin{proposition}[Invariance of rate matrices]
\label{prop:rate_matrix_equiv}
Let the discrete head parameterize node/edge CTMC rate matrices
$\mathbf{\bar{K}}_t^{\mathbf{X}}$ and $\mathbf{\bar{K}}_t^{\mathbf{E}}$ conditioned on $\mathbf{G}_t=(\mathbf{X}_t,\mathbf{E}_t,\mathbf{R}_t)$.
Assume the rate predictors depend on coordinates only through
$\textbf{SE}(3)$-invariant quantities (e.g., pairwise distances
$\|\mathbf{r}_{t,i}-\mathbf{r}_{t,j}\|_2^2$, or other rigid invariants) and on $(\mathbf{X}_t,\mathbf{E}_t)$
through their discrete values. Then for any $g\in\textbf{SE}(3)$,
\begin{equation}
\mathbf{\bar{K}}_t^{\mathbf{X}}(g\cdot \mathbf{G}_t)=\mathbf{\bar{K}}_t^{\mathbf{X}}(\mathbf{G}_t),
\qquad
\mathbf{\bar{K}}_t^{\mathbf{E}}(g\cdot \mathbf{G}_t)=\mathbf{\bar{K}}_t^{\mathbf{E}}(\mathbf{G}_t).
\label{eq:rate_invariant}
\end{equation}
Consequently, the induced discrete transition kernel
$p_\theta^{\mathrm{disc}}(\mathbf{X}_s,\mathbf{E}_s| \mathbf{G}_t,t,\Delta)$ is $\textbf{SE}(3)$-invariant.
\end{proposition}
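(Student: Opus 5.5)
The plan is to factor each rate predictor through an explicit invariant map and then propagate the invariance: from the input features, to the predicted posterior $p_{1|t}^{\theta_1,\theta_3}$, to the average rate matrix $\bar{\mathbf{K}}$, and finally to the one-step transition kernel.

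\textbf{Step 1 (factoring the predictor).} By hypothesis there is a map $\Psi$ such that
\begin{equation*}
\bar{\mathbf{K}}_t^{\mathbf{X}}(\mathbf{G}_t) = F^{\mathbf{X}}\bigl(\mathbf{X}_t,\mathbf{E}_t,\Psi(\mathbf{R}_t),t,\Delta\bigr),
\end{equation*}
where $\Psi(\mathbf{R}_t)$ collects SE(3)-invariant quantities, e.g.\ the matrix of pairwise squared distances $\|\mathbf{r}_{t,i}-\mathbf{r}_{t,j}\|_2^2$. The edge rate $\bar{\mathbf{K}}_t^{\mathbf{E}}$ factors in the same way through some $F^{\mathbf{E}}$.

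\textbf{Step 2 (invariance of the inputs).} Take $g\in\textbf{SE}(3)$. By Eq.~\eqref{eq:se3_action_graph}, or equivalently Proposition~\ref{prop:inv_discrete}, the discrete components are unchanged: $(\mathbf{X}_t',\mathbf{E}_t')=(\mathbf{X}_t,\mathbf{E}_t)$. By Eq.~\eqref{eq:distance_invariant}, $\Psi(g\cdot\mathbf{R}_t)=\Psi(\mathbf{R}_t)$. The time inputs $t$ and $\Delta$ are scalars that $g$ does not act on. Hence every argument of $F^{\mathbf{X}}$ and $F^{\mathbf{E}}$ is identical for $\mathbf{G}_t$ and $g\cdot\mathbf{G}_t$, and Eq.~\eqref{eq:rate_invariant} follows at once.

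\textbf{Step 3 (passage through the posterior).} In this paper the rates are not output directly. The network outputs $p_{1|t}^{\theta_1,\theta_3}(\cdot\mid\mathbf{G}_t)$, and the rate is the expectation of the conditional rate $\mathbf{K}_{t|1}(\cdot,\cdot\mid z_1)$ under this posterior. So I first show that the posterior is invariant, by the Step~2 argument applied to the shared encoder's invariant (scalar) channel $\mathbf{h}_t$ and the discrete head. Then I observe that $\mathbf{K}_{t|1}$ depends only on $t$, $p_0$ and discrete states, with no dependence on coordinates. Therefore the expectation, and its time-average over $[t,s]$, is invariant.

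\textbf{Step 4 (the transition kernel).} By Eq.~\eqref{eq:transition} with finite $\Delta$, and factorizing over nodes and edges, the kernel is
\begin{equation*}
p_\theta^{\mathrm{disc}}(\mathbf{X}_s,\mathbf{E}_s\mid\mathbf{G}_t,t,\Delta)=\prod_i\Bigl[\delta\bigl(x_t^{(i)},x_s^{(i)}\bigr)+\bar{\mathbf{K}}^{(i)}\bigl(x_t^{(i)},x_s^{(i)}\bigr)\Delta\Bigr]\times\prod_{i<j}\bigl[\cdots\bigr].
\end{equation*}
Each factor is built only from invariant quantities: the discrete states, $\Delta$, and the invariant rates. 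Hence the kernel takes the same value at $\mathbf{G}_t$ and at $g\cdot\mathbf{G}_t$, for every target state $(\mathbf{X}_s,\mathbf{E}_s)$.

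\textbf{Main obstacle.} The main obstacle is Step~3. One must make sure the discrete head reads only the invariant channel of the EGNN and never the equivariant coordinate channel. EGNN layers do guarantee that node embeddings are functions of distances and types, so I would cite that structural property, rather than the generic equivariance statement of Proposition~\ref{prop:avg_vel_equiv}, to justify the hypothesis of Step~1.
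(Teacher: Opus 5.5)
Your proof is correct and follows the paper's argument: $(\mathbf{X}_t,\mathbf{E}_t)$ and the pairwise distances are unchanged under $g$, so every input to the rate predictor is the same for $\mathbf{G}_t$ and $g\cdot\mathbf{G}_t$, and the transition kernel inherits the invariance because it is a function of the rates (the paper cites matrix exponential or Euler discretization). Your Step~3, which passes the invariance through the predicted posterior $p_{1|t}^{\theta_1,\theta_3}$ and the coordinate-free conditional rates $\mathbf{K}_{t|1}$, spells out how the hypothesis holds for the actual parameterization; the paper omits this check, but it is an elaboration of the same route rather than a different one.
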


\begin{proof}
Under $g=(\mathbf{Q},\mathbf{a})$, the discrete inputs $(\mathbf{X}_t,\mathbf{E}_t)$ are unchanged
(Proposition~\ref{prop:inv_discrete}).
By Eq.~\eqref{eq:distance_invariant}, all pairwise squared distances (and any
function thereof) are unchanged.
Hence every scalar input used by the rate predictors is identical under
$(\mathbf{X}_t,\mathbf{E}_t,\mathbf{R}_t)$ and $(\mathbf{X}_t,\mathbf{E}_t,g\cdot \mathbf{R}_t)$, implying the predicted rate
matrices are identical, i.e., Eq.~\eqref{eq:rate_invariant}. Since a CTMC transition kernel over discrete states is fully determined by its
rate matrix (e.g., via matrix exponential or Euler discretization), the
resulting conditional distribution over $(\mathbf{X}_s,\mathbf{E}_s)$ is also unchanged under $g$,
thus $\textbf{SE}(3)$-invariant.
\end{proof}

\subsection{Equivariance of EQUIMF: Proposition~\ref{Prop:4.4}}

\begin{proposition}[Equivariance of EQUIMF]
\label{prop:meanflow_equiv}
Consider one coupled MeanFlow step along a time bridge $(t,s)$:
(i) sample/update $(\mathbf{X}_s,\mathbf{E}_s)$ from the discrete kernel determined by
$\mathbf{\bar{K}}_t^{\mathbf{X}},\mathbf{\bar{K}}_t^{\mathbf{E}}$, and (ii) update coordinates by the average-velocity field
\begin{equation}
\mathbf{R}_s = \mathbf{R}_t + (s-t)\,{u}_{t\to s}(\mathbf{G}_t).
\label{eq:meanflow_cont_update}
\end{equation}
Assume Proposition~\ref{prop:inv_discrete}, \ref{prop:avg_vel_equiv},
and \ref{prop:rate_matrix_equiv} hold.
Then the overall one-step transition operator is $\textbf{SE}(3)$-equivariant:
for any $g\in\textbf{SE}(3)$, if $\mathbf{G}_t' = g\cdot \mathbf{G}_t$, then the next state
$\mathbf{G}_s'$ produced by the same transition satisfies
\begin{equation}
\mathbf{G}_s' = g\cdot \mathbf{G}_s .
\label{eq:coupled_equiv_step}
\end{equation}
\end{proposition}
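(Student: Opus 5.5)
The plan is to check equivariance separately on the two components of one coupled step and then combine them, using the product structure of the action in Eq.~\eqref{eq:se3_action_graph}. Write $\mathbf{G}_t=(\mathbf{X}_t,\mathbf{E}_t,\mathbf{R}_t)$ and $\mathbf{G}_t'=g\cdot\mathbf{G}_t=(\mathbf{X}_t,\mathbf{E}_t,g\cdot\mathbf{R}_t)$. By Proposition~\ref{prop:inv_discrete}, the discrete inputs coincide. It then suffices to show two things: the discrete outputs at time $s$ have the same law from $\mathbf{G}_t$ and $\mathbf{G}_t'$, and $\mathbf{R}_s'=g\cdot\mathbf{R}_s$.

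\textbf{Discrete part.} By Proposition~\ref{prop:rate_matrix_equiv}, $\bar{\mathbf{K}}_t^{\mathbf{X}}(\mathbf{G}_t')=\bar{\mathbf{K}}_t^{\mathbf{X}}(\mathbf{G}_t)$, and likewise for $\mathbf{E}$. Hence the induced kernel $p_\theta^{\mathrm{disc}}(\cdot\mid\mathbf{G}_t',t,\Delta)$ equals $p_\theta^{\mathrm{disc}}(\cdot\mid\mathbf{G}_t,t,\Delta)$. Because the step is stochastic, I will read Eq.~\eqref{eq:coupled_equiv_step} either in distribution or under a shared random seed. Under the coupling that uses the same uniform variates for the categorical draws in both runs, identical kernels give $(\mathbf{X}_s',\mathbf{E}_s')=(\mathbf{X}_s,\mathbf{E}_s)$ pathwise. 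Since discrete states are fixed by the action, this is exactly the discrete half of $g\cdot\mathbf{G}_s$.

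\textbf{Continuous part.} Apply the update Eq.~\eqref{eq:meanflow_cont_update} to $\mathbf{G}_t'$ and use the learned equivariance Eq.~\eqref{eq:learned_avg_vel_equiv} from Proposition~\ref{prop:avg_vel_equiv}. This gives
\[
\mathbf{R}_s'=g\cdot\mathbf{R}_t+(s-t)\,\hat u_{t\to s}(\mathbf{G}_t)\mathbf{Q}^\top
=\mathbf{R}_t\mathbf{Q}^\top+\mathbf{1}\mathbf{a}^\top+(s-t)\,\hat u_{t\to s}(\mathbf{G}_t)\mathbf{Q}^\top .
\]
This equals $\bigl(\mathbf{R}_t+(s-t)\hat u_{t\to s}(\mathbf{G}_t)\bigr)\mathbf{Q}^\top+\mathbf{1}\mathbf{a}^\top=g\cdot\mathbf{R}_s$. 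The translation enters once, through $\mathbf{R}_t$, while the velocity transforms only by $\mathbf{Q}$. This is the right behaviour for a displacement field, and it is why Eq.~\eqref{eq:learned_avg_vel_equiv} carries no $\mathbf{a}$ term.

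\textbf{Combining and the main obstacle.} Putting the two parts together gives $\mathbf{G}_s'=g\cdot\mathbf{G}_s$ for one step. Induction over the sampling steps of Algorithm~\ref{alg:sampling} then extends this to the whole trajectory. If the initial noise law is $\mathbf{SE}(3)$-invariant, the final generated distribution is invariant as well. That condition requires restricting to the zero-center-of-mass subspace, as in EDM, because a Gaussian on $\mathbb{R}^{N\times3}$ is not translation invariant.

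The main obstacle is the stochastic discrete step: making precise in what sense ``the same transition'' yields equality, pathwise via common randomness or in law. A second subtlety is that Eq.~\eqref{eq:learned_avg_vel_equiv} must hold for the network's dependence on the full $\mathbf{G}_t$, including discrete inputs. I will simply invoke it as assumed, since EGNN message functions only see invariants (Eq.~\eqref{eq:distance_invariant}) together with the unchanged $(\mathbf{X}_t,\mathbf{E}_t)$.
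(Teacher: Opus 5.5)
Your proposal is correct and follows the paper's proof essentially verbatim. Invariance of the rate matrices gives identical discrete kernels, and the factorization $\mathbf{R}_s'=\bigl(\mathbf{R}_t+(s-t)\hat u_{t\to s}(\mathbf{G}_t)\bigr)\mathbf{Q}^\top+\mathbf{1}\mathbf{a}^\top=g\cdot\mathbf{R}_s$ handles the coordinates. Your explicit reading of the discrete equality, in law or pathwise under shared randomness, also sharpens the paper's step, which passes from ``same distribution'' to $(\mathbf{X}_s',\mathbf{E}_s')=(\mathbf{X}_s,\mathbf{E}_s)$ without comment.
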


\begin{proof}
\emph{(Discrete part).}
By Proposition~\ref{prop:rate_matrix_equiv}, the rate matrices (hence the
discrete kernel) are invariant under $g$; therefore sampling $(\mathbf{X}_s,\mathbf{E}_s)$ from
$G_t$ or from $\mathbf{G}_t'=g\cdot \mathbf{G}_t$ yields the same distribution.
Since $(\mathbf{X},\mathbf{E})$ are invariant variables, we have $(\mathbf{X}_s',\mathbf{E}_s')=(\mathbf{X}_s,\mathbf{E}_s)$.

\emph{(Continuous part).}
Let $g=(\mathbf{Q},\mathbf{a})$.
Using Eq.~\eqref{eq:meanflow_cont_update} and Proposition~\ref{prop:avg_vel_equiv},
\[
\mathbf{R}_s'
=
g\cdot \mathbf{R}_t + (s-t)\,{u}_{t\to s}(\mathbf{G}_t') 
=
(\mathbf{R}_t\mathbf{Q}^\top+\mathbf{1}\mathbf{a}^\top) + (s-t)\,{u}_{t\to s}(\mathbf{G}_t)\,\mathbf{Q}^\top.
\]
Factorizing $\mathbf{Q}^\top$ gives
\[
\mathbf{R}_s'
=
\big(\mathbf{R}_t + (s-t){u}_{t\to s}(\mathbf{G}_t)\big)\mathbf{Q}^\top + \mathbf{1}\mathbf{a}^\top
=
\mathbf{R}_s \mathbf{Q}^\top + \mathbf{1}\mathbf{a}^\top
=
g\cdot \mathbf{R}_s.
\]
Combining with $(\mathbf{X}_s',\mathbf{E}_s')=(\mathbf{X}_s,\mathbf{E}_s)$ yields $\mathbf{G}_s'=(\mathbf{X}_s',\mathbf{E}_s',\mathbf{R}_s')=g\cdot \mathbf{G}_s$,
proving Eq.~\eqref{eq:coupled_equiv_step}.
\end{proof}

\section{Synthetic Graph Generation Performance}\label{app:appendixC}
\begin{table*}[]
% \begin{center}
\centering
\captionof{table}{Graph generation performance on the synthetic datasets: Planar, Tree and SBM. 
Results are mean $\pm$ std over five runs (40 graphs/run).}
\label{tab:synthetic}
\scriptsize
\setlength{\tabcolsep}{6pt}
\begin{tabular}{l l | cc | cc | cc}
\toprule
Model & Class & \multicolumn{2}{c|}{Planar} & \multicolumn{2}{c|}{Tree} & \multicolumn{2}{c}{SBM} \\
\cmidrule(lr){3-4}\cmidrule(lr){5-6}\cmidrule(lr){7-8}
 &  & V.U.N.~$\uparrow$ & Ratio~$\downarrow$ & V.U.N.~$\uparrow$ & Ratio~$\downarrow$ & V.U.N.~$\uparrow$ & Ratio~$\downarrow$ \\
\midrule
Train set & --- & 100 & 1.0 & 100 & 1.0 & 85.9 & 1.0 \\
\midrule
GraphRNN (~\cite{you2018graphrnn}) & Autoregressive & 0.0 & 490.2 & 0.0 & 607.0 & 5.0 & 14.7 \\
GRAN (~\cite{liao2019gran})    & Autoregressive & 0.0 & 2.0   & 0.0 & 607.0 & 25.0 & 9.7 \\
SPECTRE (~\cite{Martinkus2022SPECTRE}) & GAN       & 25.0 & 3.0   & --- & ---   & 52.5 & 2.2 \\
DiGress (~\cite{Vignac2022DiGress})    & Diffusion & 77.5 & 5.1   & 90.0 & \textbf{1.6} & 60.0 & \textbf{1.7} \\
EDGE (~\cite{Chen2023EDGE})         & Diffusion & 0.0  & 431.4 & 0.0  & 850.7 & 0.0  & 51.4 \\
BwR (EDP\mbox{-}GNN) (~\cite{Diamant2023Bandwidth}) & Diffusion & 0.0 & 251.9 & 0.0 & 11.4 & 7.5 & 38.6 \\
BiGG (~\cite{Dai2020ScalableSparseGraphs})     & Autoregressive & 5.0  & 16.0  & 75.0 & 5.2   & 10.0 & 11.9 \\
GraphGen (~\cite{Goyal2020GraphGen}) & Autoregressive & 7.5 & 210.3 & 95.0 & 33.2  & 5.0  & 48.8 \\
HSpectre (~\cite{Bergmeister2023ILE}) & Diffusion & 95.0 & 2.1 & \textbf{100.0} & 4.0 & 75.0 & 10.5 \\
GruM (~\cite{Jo2024DiffusionMixture})      & Diffusion & 90.0 & 1.8 & --- & --- & 85.0 & \textbf{1.1} \\
CatFlow (~\cite{Eijkelboom2024VFM}) & Flow & 80.0 & --- & --- & --- & 85.0 & --- \\
DisCo (~\cite{Xu2024DiSco})     & Diffusion & 83.6$\pm$2.1 & --- & --- & --- & 66.2$\pm$1.4 & --- \\
Cometh (~\cite{Siraudin2024Cometh}) & Diffusion & 99.5$\pm$0.9 & --- & --- & --- & 75.0$\pm$3.7 & --- \\
DeFoG (5\% steps) & Flow & 95.0$\pm$3.2 & 3.2$\pm$1.1 & 73.5$\pm$9.0 & 2.5$\pm$1.0 & 86.5$\pm$5.3 & 2.2$\pm$0.3 \\
\rowcolor{gray!12}
DeFoG (~\cite{Qin2024DeFoG})& Flow & 98.5$\pm$1.0 & 1.4$\pm$0.4 & 96.5$\pm$2.6 & 1.4$\pm$0.4 & 90.0$\pm$5.1 & 4.9$\pm$1.3 \\
\midrule
\rowcolor{gray!12}
EQUIMF (Our method)& Flow & \textbf{99.6}$\pm$1.0 & \textbf{1.6}$\pm$0.1 & 97.2$\pm$2.2 & \textbf{1.6}$\pm$0.2 & \textbf{91.2}$\pm$3.0 & 4.9$\pm$2.0\\
\bottomrule
\end{tabular}
% \end{center}
\end{table*}

\paragraph{Setup.}
As we know, our method is a union of discrete and continuous domains; we test the discrete generation performance in this section. Following the ~\cite{Qin2024DeFoG}, we evaluate on standard synthetic graph benchmarks that cover diverse structural patterns, including Planner, SBM~\cite{spectre2022}, Tree datasets~\cite{iterative2024}.
we test the common metrics that valid, unique, and novel. The baseline are all followed ~\cite{Qin2024DeFoG}.

\paragraph{Results and Analysis.}
The results are presented in Table \ref{tab:synthetic}, where performance is measured by VUN and structural Ratio metrics, averaged over five independent runs. The consistent performance gains across all three synthetic datasets confirm the effectiveness of our discrete graph geometry.

\section{Sample Distortion}\label{app:appendixD}
In our project, we adopt a time distortion strategy similar to the one proposed in ~\cite{Qin2024DeFoG}.The core idea is to apply a non-uniform time step discretization during the sampling process, particularly emphasizing critical regions where fine-grained control is needed. This time distortion function address the issue where uniform time step discretization may not preserve essential properties of the graph during critical stages of sampling, such as when fine local variations are crucial for global graph characteristics like planarity or connectivity. In particular, we define a distortion function \( f_\text{dist}(t) \) that increases the granularity of time steps during the most critical stages of the graph evolution, such that:
\begin{align*}
    t' = f_\text{dist}(t) \quad \text{where} \quad f_\text{dist}(t) \in [0, 1],
\end{align*}
is a strictly increasing function that stretches the final parts of the evolution process to capture more subtle structural changes in the graph. For instance, one such function we apply is:
\[
    f_\text{dist}(t) = 2t - t^2,
\]
which accelerates sampling during the initial phase and slows down the final parts to capture critical graph characteristics. This is similar to the polynomial distortion used in \cite{Qin2024DeFoG}.The sample distortion strategy, designed to emphasize key transitions in graph dynamics, improves our model’s sensitivity to intricate local changes. 

\section{Experimental Details}\label{app:appendixE}
For the dataset, we primarily use the publicly available dataset QM9~\cite{ramakrishnan2014quantum} and GEOM-DRUG dataset~\cite{axelrod2022geom}. Our experiments are implemented with a Pytorch based architecture, and the experimental environment is Ubuntu 22.04. For computational resources, we use an PRO6000 96GB GPU. Detailed hyperparameter settings are provided in Table.

\begin{table}[H]
    \centering
    \caption{Hyperparameter Settings}
    \begin{tabular}{lc}
        \toprule
        Hyperparameter & Value \\
        \midrule
        Batch Size & 64 \\
        Optimizer & Adam \\
        Learning Rate & \(1 \times 10^{-4}\) \\
        Hidden Layer & 9 \\
        Hidden Dimension & 256 \\
        Distortion function & Ploydec\\
        iteration & 1000 \\
        $\lambda_{\text{cont}}$ & 0.2 \\
        $\lambda_{\text{disc}}$ & 0.8 \\
        NFE & 50 \\
        \bottomrule
    \end{tabular}
    \label{tab:hyperparams}
\end{table}

\section{Shared Representation Encoder}\label{app:appendixF}
\subsection{Shared Representation Encoder}\label{sec:4.2}
To enable tight coupling between discrete graph structure and continuous 3D geometry, we introduce a shared \(\mathbf{SE}\)(3)-equivariant backbone encoder $\phi_{\theta_1}$ that unifies the feature extraction and cross-modal information fusion for both generation heads. The encoder takes $\mathbf{G}_t$ and a time embedding $\tau = \text{MLP}_t(t)$ that encodes the temporal stage of the noising process as input. The encoder proceeds in three stages.
(1) Input Embedding: Discrete features are projected into a high-dimensional latent space, with time embedding $\tau$ added to ensure temporal consistency.
(2) Equivariant Message Passing: A stack of $L$ EGNN layers performs message passing that operates on relative atomic coordinates and invariant features, thereby preserving \(\mathbf{SE}\)-equivariance.
(3) Feature Branching: The output of the EGNN stack is split into two complementary branches: an \(\mathbf{SE}\)(3)-invariant structural feature $\mathbf{h}_t^{\text{disc}} \in \mathbb{R}^{d_h}$, obtained by pooling node features and refining with the global graph feature, and an \(\mathbf{SE}\)(3)-equivariant geometric feature $\mathbf{h}_t^{\text{cont}} \in \mathbb{R}^{N \times d_h}$, derived directly from the node-level EGNN outputs, where 
\begin{align*}
    \mathbf{h}_t = [\mathbf{h}_t^{\text{disc}}, \mathbf{h}_t^{\text{cont}}] = \phi_{\theta_1} \left(\mathbf{G}_t, t\right) .
\end{align*}
The invariant structural feature $\mathbf{h}_t^{\text{disc}}$, which implicitly encodes geometric information from $\mathbf{R}_t$ via the shared encoder is fed to the discrete MeanFlow head to condition graph evolution on geometry. Meanwhile, the equivariant geometric feature $\mathbf{h}_t^{\text{cont}}$, which implicitly encodes structural information from $(\mathbf{X}_t, \mathbf{E}_t)$ — is passed to the continuous MeanFlow head to inform velocity field prediction with structural context. This design ensures that both generation heads operate on a unified, information-rich latent representation, laying the foundation for mutually conditioned joint generation.

\end{document}